\def\eqref#1{equation~\ref{#1}}
\def\1{\bm{1}}
\DeclareMathAlphabet{\mathsfit}{\encodingdefault}{\sfdefault}{m}{sl}
\SetMathAlphabet{\mathsfit}{bold}{\encodingdefault}{\sfdefault}{bx}{n}
\DeclareMathOperator*{\argmin}{arg\,min}
\pgfplotsset{compat=1.18}
\newtheorem*{remark}{Remark}
\renewcommand{\eqref}[1]{Eq.~(\ref{#1})}
\renewcommand{\hat}{\widehat}
\newcounter{protocol}%
\newcounter{algorithm saved}%
\newenvironment{protocol}[1][htb]{%
    \renewcommand{\ALG@name}{Function}%
    \setcounter{algorithm saved}{\value{algorithm}} %
    \setcounter{algorithm}{\value{protocol}}%
    \begin{algorithm}[#1]%
    }{\end{algorithm}
    \setcounter{protocol}{\value{algorithm}}%
    \setcounter{algorithm}{\value{algorithm saved}}%
}
\title{\bf{Averaging of Conformity Scores for Classification}}
\author{
    Rui Luo\\
    City University of Hong Kong\\
    ruiluo@cityu.edu.hk
    \and
    Zhixin Zhou\\
    Alpha Benito Research\\
    zzhou@alphabenito.com
}
\date{} %
\newcommand{\bw}{\boldsymbol{w}}
\newcommand{\Itrain}{\mathcal{I}_{\textrm{train}}}
\newcommand{\Itest}{\mathcal{I}_{\textrm{test}}}
\newcommand{\Ione}{\mathcal{I}_{1}}
\newcommand{\Itwo}{\mathcal{I}_{2}}
\newcommand{\Ithree}{\mathcal{I}_{3}}
\newtheorem{theorem}{Theorem}
\newtheorem{lemma}{Lemma}
\newtheorem{prop}{Proposition}
\newcommand{\CVF}{\hat C_{1-\alpha}^{\text{VFCP}}}
\newcommand{\CEF}{\hat C_{1-\alpha}^{\text{EFCP}}}
\newcommand{\CDL}{\hat C_{1-\alpha}^{\text{DLCP}}}
\newcommand{\CDLP}{\hat C_{1-\alpha}^{\text{DLCP+}}}
\newcommand{\Cs}{\hat C^{*}}
\begin{document}

\maketitle

\begin{abstract}
Conformal prediction provides a robust framework for generating prediction sets with finite-sample coverage guarantees, independent of the underlying data distribution. However, existing methods typically rely on a single conformity score function, which can limit the efficiency and informativeness of the prediction sets. In this paper, we present a novel approach that enhances conformal prediction for multi-class classification by optimally averaging multiple conformity score functions. Our method involves assigning weights to different score functions and employing various data splitting strategies. Additionally, our approach bridges concepts from conformal prediction and model averaging, offering a more flexible and efficient tool for uncertainty quantification in classification tasks. We provide a comprehensive theoretical analysis grounded in Vapnik–Chervonenkis (VC) theory, establishing finite-sample coverage guarantees and demonstrating the efficiency of our method. Empirical evaluations on benchmark datasets show that our weighted averaging approach consistently outperforms single-score methods by producing smaller prediction sets without sacrificing coverage. 
\end{abstract}

\section{Introduction}

Conformal prediction \cite{vovk2005algorithmic, manokhin2022awesome} is a robust framework that generates prediction sets with finite-sample coverage guarantees, irrespective of the underlying data distribution \cite{angelopoulos2023conformal}. The fundamental principle of conformal prediction is to construct a prediction set for a new test instance based on the training data, ensuring that the true label is included with a probability of at least \(1 - \alpha\). This coverage assurance holds regardless of the specific point prediction algorithm employed, making conformal prediction a versatile tool for uncertainty quantification in machine learning. In the split conformal prediction framework \cite{papadopoulos2002inductive, lei2014distribution, vovk2018cross}, the training data is partitioned into a training set and a calibration set. The predictive model is trained on the training subset, while the score functions are evaluated on the calibration subset. The conformal prediction set then comprises all labels whose scores fall below a specific quantile of the calibration scores, with the quantile determined by the coverage level.

The choice of score function is critical in determining the efficiency of the resulting prediction sets, especially for multi-class classification. Well-chosen score functions can lead to more informative and precise predictions. This flexibility allows conformal prediction to adapt to the specific characteristics of the data and distribution. Consequently, developing score functions that optimize informativeness and efficiency for various problem settings, including regression \cite{papadopoulos2008normalized, papadopoulos2011regression, romano2019conformalized, kivaranovic2020adaptive, guan2023localized, colombo2023training, colombo2024normalizing} and multi-class classification \cite{sadinle2019least, romano2020classification, angelopoulos2021uncertainty, huang2024conformal, luo2024trustworthy}, remains an active area of research. This work focuses on enhancing the efficiency of prediction sets for classification tasks. While the underlying idea can extend to regression, this paper will focus on the tasks of multi-class classification. 

Our approach assumes the availability of multiple score functions for the same classification task, each differing due to variations in the classification algorithm or the definition of the score. We propose assigning optimal weights to aggregate these score functions. Using a validation set, we determine a threshold to achieve the desired coverage and identify the weight combination that minimizes the prediction set size. The final prediction is then based on this weighted score function.  Our aim is to find the optimal weights for linear combinations of score functions, thereby fully leveraging the strengths of existing score functions. While our approach shares similarities with \cite{yang2024selection}, it stands out in three key aspects:

\begin{enumerate}

    \item \textbf{Weighted Averaging of Score Functions:} Instead of selecting the single best-performing score function, our approach combines multiple score functions through optimal weighting. This averaging can yield more efficient prediction sets than any individual score function while maintaining the desired coverage guarantees.
    
    \item \textbf{Novel Data Splitting Strategies:} We explore and categorize several data splitting methods to determine the optimal weights for combining score functions. In addition to Validity First Conformal Prediction (VFCP) and Efficiency First Conformal Prediction (EFCP) discussed in \cite{yang2024selection}, we introduce Data Leakage Conformal Prediction (DLCP) and its variant DLCP+, which utilize all available data to enhance weight determination.
    
    \item \textbf{Theoretical Foundations Using VC Theory:} We provide a theoretical analysis of our method that leverages Vapnik–Chervonenkis theory to establish coverage guarantees and expected prediction set sizes. This solid mathematical foundation underscores the validity and efficiency of our approach across different data splitting strategies.
\end{enumerate}

Beyond its novel contributions to the conformal prediction literature, our method is closely related to model averaging \cite{claeskens2008model}, a well-established technique in machine learning. Unlike traditional model averaging, which assigns weights to different models to improve prediction accuracy, our method assigns weights to score functions. This distinction requires the development of specific data splitting techniques to ensure the desired coverage guarantees. Consequently, our work can be viewed as an innovative adaptation of model averaging principles to the conformal prediction framework.

The remainder of the paper is organized as follows. In Section \ref{sec:theory}, we detail our weighted averaging approach and the various data splitting strategies employed. Section \ref{sec:theory} presents the theoretical analysis, establishing coverage guarantees and expected prediction set sizes. In Section \ref{sec:experiment}, we demonstrate the effectiveness of our method through experiments. Related works are discussed in Section \ref{sec:related}. We conclude in Section \ref{sec:conclusion} and outline future research directions.

\section{Methodology}\label{sec:method}

\subsection{Conformal Prediction for Classification}

We start by assuming that a $K$-class classification algorithm provides $\hat{p}_y(x)$, which approximates $P(Y=y|X=x)$ for $y\in[K]$. While our method and theoretical analysis do not depend on the accuracy of this approximation, it is beneficial to assume that higher values of $\hat{p}_y(x)$ indicate a greater likelihood of sample $x$ having label $y$. We consider this training procedure to be performed on a separate dataset, ensuring that $\hat{p}_y(x)$ is independent of the dataset used in this paper.

Let us first consider a single \textit{non-conformity score function} $s(x,y)$. This function is defined such that higher values of $s(x,y)$ indicate a higher priority of believing $x$ has label $y$. A common choice for the non-conformity score is $s(x,y) = \hat{p}_y(x)$~\cite{sadinle2019least}. In this context, conformal prediction for classification problems can be described by the following algorithm. To summarize, we first find a threshold in a set of labeled data, so that $s(x_i,y_i)\ge Q_{1-\alpha}$ holds for at least $1-\alpha$ proportion in the set $\Itwo$. Then we use this threshold to define the prediction set for $x_i$ in the test set, which is the upper level set of the function $y\mapsto s(x_i,y)$.

This algorithm constructs prediction sets $\hat{C}_{1-\alpha}(x_i)$ for each $i\in \Itest$, based on the non-conformity scores and a threshold determined by the desired coverage probability $1-\alpha$. The algorithm splits the training data into two subsets: $\Ione$ for calculating the non-conformity scores, and $\Itwo$ for calibration. The threshold $Q_{1-\alpha}$ is chosen to ensure the desired coverage probability under the exchangeability assumption. 

\begin{algorithm}[t]
\caption{Split Conformal Prediction}
\begin{algorithmic}[1]
\Require Data $\{(x_i, y_i)\}_{i\in\Itrain}$, $\{x_i\}_{i\in\Itest}$, pre-determined coverage probability $1 -\alpha$
\Ensure A prediction set $\hat{C}_\alpha(x_i), i\in \Itest$
\State Randomly split $\Itrain$ into $\Ione$ and $\Itwo$.
\State Train a model $\hat p (x)$ on $\{(x_i, y_i)\}_{i\in \Ione}$.
\State $Q_{1-\alpha} \gets \lceil (1+|\Itwo|)(1-\alpha) \rceil$-th smallest non-conformity score $s(x_i,y_i)$ for $i\in\Itwo$. \label{step:two:simple}%
\State $\hat{C}_{1-\alpha}(x_i, Q_{1-\alpha}) \gets \{y\in[K]: s(x_i, y) \geq Q_{1-\alpha}\}$ for $i\in \Itest$. \label{step:three:simple}
\end{algorithmic}\label{alg:simple}
\end{algorithm}

\begin{figure*}[!h]
\centering

\begin{tikzpicture}[scale=0.8,
    boxleft/.style = {rectangle, draw, text width=0.25\textwidth, align=center},
    boxright/.style = {rectangle, draw, text width=0.4\textwidth, align=center},
    arrow/.style = {->, thick}
]

\node[boxleft] (logits) at (0,0) {$\{(x_i, y_i)\}_{i \in \mathcal{I}_\text{train}}$};
\node[boxright] (logits3) at (10,0) {$\{(x_i, y_i)\}_{i \in \mathcal{I}_3}$};
\node[boxleft] (logits1) at (0,-1.5) {$\{(x_i, y_i)\}_{i \in \mathcal{I}_1}$};
\node[boxright] (logits2) at (10,-1.5) {$\{(x_i, y_i)\}_{i \in \mathcal{I}_2}$};
\node[boxleft] (quantile) at (0,-3) {$Q_{1-\alpha}^{(1)}(\bw)$ by \eqref{eq:quantile:one}};
\node[boxright] (set) at (10,-3) {$\big(\hat{C}_{1-\alpha}(x_i, \bw, Q_{1-\alpha}^{(1)}(\bw)) \big)_{i \in \mathcal{I}_2}$ by \eqref{eq:prediction:two}};
\node[boxleft] (weight) at (0,-4.5) {$\hat{\bw}$ by \eqref{eq:hat:w}};
\node[boxright] (quantile2) at (10,-4.5) {$Q_{1-\alpha}^{(2)}$ by \eqref{eq:quantile:three}};
\node[boxleft] (test_logits) at (0,-6) {$\{x_i\}_{i \in \mathcal{I}_\text{test}}$};
\node[boxright] (set2) at (10,-6) {$\left( \hat{C}_{1-\alpha}(x_i, \hat{\bw}, Q_{1-\alpha}^{(2)}) \right)_{i \in \mathcal{I}_\text{test}}$ by \eqref{eq:prediction:test}};

\draw[arrow] (logits) -- (logits1);
\draw[arrow] (logits) -- (logits3);
\draw[arrow] (logits) -- (logits2);
\draw[arrow] (logits3.east) to [out=0,in=0] (quantile2.east);
\draw[arrow] (logits1) -- (quantile);
\draw[arrow] (logits2) -- (set);
\draw[arrow] (quantile) -- (set);
\draw[arrow] (set) -- (weight);
\draw[arrow] (weight) -- (quantile2);
\draw[arrow] (quantile2) -- (set2);
\draw[arrow] (test_logits) -- (set2);

\end{tikzpicture}
\caption{This example illustrates a framework for data splitting into $\mathcal{I}_1, \mathcal{I}_2, \mathcal{I}_3$, and $\mathcal{I}_\text{test}$. 
Algorithm \ref{alg:weight} presents the complete procedure. 
Briefly, $\mathcal{I}_1$ and $\mathcal{I}_2$ are used in Steps 1-2 to select the optimal weight $\hat{\mathbf{w}}$, while $\mathcal{I}_3$ is used in Step 3 as the calibration set for $\mathcal{I}_\text{test}$ predictions. We present four options: VFCP, EFCP, DLCP, and DLCP+. Their coverage and size properties are discussed theoretically in Section \ref{sec:theory} and empirically in Section \ref{sec:experiment}.}
\label{fig:logits-labels-diagram}
\end{figure*}

\subsection{Various Score Functions for Classifications}\label{sec:score:functions}

In conformal prediction, the choice of score function $s(x,y)$ critically influences the prediction sets. These functions act as non-conformity measures that quantify how appropriately a label matches an input.

Score functions are determined by two primary factors: (1) The predictive model's quality in estimating $\hat{p}_y(x) \approx \mathbb{P}(Y=y|X=x)$, and (2) The specific methodology for converting these probability estimates into scores. We will consider approaches:

\begin{enumerate}
    \item Threshold (THR) \cite{sadinle2019least}:   
           $s_{\text{THR}}(x,y) = \hat{p}_y(x)$  
    \item Adaptive Prediction Sets (APS) \cite{romano2020classification}:   
           $s_{\text{APS}}(x,y) = \sum_{y'\in[K]} \hat{p}_{y'}(x) \mathbf{1}\{\hat{p}_{y'}(x) \leq \hat{p}_y(x)\}$  
    \item Rank-based (RANK) \cite{luo2024trustworthy}:   
           $s_{\text{RANK}}(x,y) = |\{y'\in[K]:\hat{p}_{y'}(x) < \hat{p}_y(x)\}|$  
\end{enumerate}  

These score functions integrate directly with Algorithm~\ref{alg:simple}. While the original formulations include additional refinements to achieve exact $1-\alpha$ coverage guarantees, these refinements do not materially affect our combination methodology and are omitted for simplicity. Our theoretical analysis remains invariant to specific score function design choices. Other notable approaches like RAPS \cite{angelopoulos2021uncertainty} and SAPS \cite{huang2024conformal} demonstrate alternative methodologies for deriving scores from $\hat{p}_y(x)$.

\subsection{Averaging Score Functions}

In \cite{yang2024selection}, the authors propose their method with multiple score functions. Their approach selects the score function that yields the smallest average prediction set size among the available score functions. They also introduce two data splitting methods: Efficiency First Conformal Prediction (EFCP) and Validity First Conformal Prediction (VFCP).

Given a vector of score functions $s(x,y)=(s_1(x,y), \dots, s_d(x,y))^\top$, instead of simply choosing the score that provides the smallest prediction set on a calibration set, we propose assigning weights $\bw\in \mathcal W \subseteq \mathbb R^d$ to the non-confomity scores and defining the weighted score function:
\begin{align}\label{eq:weighted:score}
\langle \bw, s(x,y) \rangle = \sum_{j=1}^d \bw_j s_j(x,y).
\end{align}
By leveraging different scores to create a completely new score function, we expect the weighted score to outperform any individual score. This approach allows for more flexibility in combining the strengths of various score functions, potentially leading to improved prediction set efficiency.

\begin{algorithm}[!t]
\caption{Optimal Weight Selection and Prediction Set Construction}
\begin{algorithmic}[1]
\Require $\{(x_i, y_i)\}_{i \in \mathcal{I}_{\textrm{train}}}$: Labeled training data; %

$\{x_i\}_{i \in \mathcal{I}_{\textrm{test}}}$: Unlabeled test data; 

$\alpha \in (0, 1)$: Pre-defined significance level;

$s_i(\cdot,\cdot), i=1,\dots, d$: score functions.

$\mathcal{W} \subseteq \mathbb R^{d}$: Set of possible weight vectors.

\Ensure $\big( C_{1-\alpha}(X_i) \big)_{i\in \mathcal{I}_{\textrm{test}}}$: Sequence of prediction sets for test data with anticipated $1-\alpha$ coverage.

\State Decide $\mathcal{I}_{1} \subseteq \mathcal{I}_{\textrm{train}}$, 
$\mathcal{I}_{2} \subseteq \mathcal{I}_{\textrm{train}} \cup \mathcal{I}_{\textrm{test}}$,
and $\mathcal{I}_{3} \subseteq \mathcal{I}_{\textrm{train}}$
for Step 1, 2, and 3, respectively

\State \textbf{Step 1: Grid Search over Weights}
\For{$\boldsymbol{w} \in \mathcal{W}$}
\State $Q_{1-\alpha}^{(1)}(\bw) \gets$ \Call{Calibration}{$\bw, \Ione, \alpha$}.
\EndFor

\State \textbf{Step 2: Optimal Weight Selection}
\For{$\bw \in \mathcal{W}$}
\State $\big(\hat{C}_{1-\alpha}(x_i;\bw, Q_{1-\alpha}^{(1)}(\bw))\big)_{i \in \mathcal{I}_{2}}$ \\ 
\hspace{1.5cm}$\gets$ \Call{Evaluation}{$\bw, \Itwo, Q_{1-\alpha}^{(1)}(\bw)$}.
\State Compute average prediction set size $S{(\boldsymbol{w})} = \frac 1{|\Itwo|} \sum_{i \in \mathcal{I}_{2}} |\hat{C}_{1-\alpha}(x_i;\bw, Q_{1-\alpha}^{(1)}(\bw)\big)|$.
\EndFor
\State $\hat\bw \gets \arg\min_{\boldsymbol{w} \in \mathcal{W}} S{(\boldsymbol{w})}$.

\State \textbf{Step 3: Test-time Prediction Set }
\State $Q_{1-\alpha}^{(2)} \gets$ \Call{Calibration}{$\hat\bw, \Ithree, \alpha$}.
\State $\big(\hat{C}_{1-\alpha}(x_i,\hat\bw, Q_{1-\alpha}^{(2)})\big)_{i \in \mathcal{I}_{\textrm{test}}} $ $\gets$ \Call{Evaluation}{$\hat\bw, \Itest, \Ithree$}.

\State \Return $\big(\hat{C}_{1-\alpha}(x_i,\hat\bw, Q_{1-\alpha}^{(2)})\big)_{i \in \mathcal{I}_{\textrm{test}}}$
\end{algorithmic}\label{alg:weight}
\end{algorithm}

\subsection{The Optimal Weight and the Threshold} \label{subsec:weighting}

In this section, we present a detailed procedure for determining the optimal weight vector $\bw$ in the context of the weighted score approach introduced in~\eqref{eq:weighted:score}. This procedure extends Algorithm~\ref{alg:simple} by incorporating an additional step to minimize the expected size of the prediction set. The following steps outline the process:

\begin{enumerate}
    \item Extract $\Ione, \Itwo, \Ithree$ from dataset. Assume $\Ione=\Itwo$ and $\Ione, \Ithree$ partition $\Itrain$ for clarity.
    \item Define $\mathcal{W} \subseteq \mathbb{R}^d$ for $\bw$ candidates. For non-negative weights, use grid of simplex $\Delta(d) \subseteq \mathbb{R}^d$.
    \item Estimate the threshold for every $\bw\in\mathcal{W}$ on $\Ione$. In the set $\Ione$, calculate the threshold for every $\bw\in\mathcal{W}$:
\begin{equation}\label{eq:quantile:one}
    Q_{1-\alpha}^{(1)} (\bw) =  \ \lceil (1 + |\mathcal{I}_1|)(1 - \alpha) \rceil \text{-th smallest} \text{value of} \quad \{\langle \bw, s(x_i, y_i) \rangle : i \in \mathcal{I}_1\}.
\end{equation}
    This corresponds to Step~\ref{step:two:simple} in Algorithm~\ref{alg:simple}. 
    
    \item Determine the prediction set for data in $\Itwo$. For a given $\bw$ and threshold $Q_{1-\alpha}^{(1)}(\bw)$, define the prediction set for each sample $x_i, i\in\Itwo$ as:
    \begin{equation}\label{eq:prediction:two}
        \hat{C}_{1-\alpha} (x_i, \bw, Q_{1-\alpha}^{(1)}(\bw) )=  \{y\in[K]:
        \langle \bw, s(x_i,y)\rangle\ge Q_{1-\alpha}^{(1)}(\bw) \}.
    \end{equation}
    This corresponds to Step~\ref{step:three:simple} in Algorithm~\ref{alg:simple}. However, we are not done yet, since these are the prediction sets for all $\bw\in\mathcal{W}$, and our goal is to minimize the prediction set size. This intuitively leads to the next step.
    
    \item The optimal weight vector \(\hat{\bw}\) is obtained by minimizing the empirical prediction set size:
    \begin{align}\label{eq:hat:w}
        \hat{\bw} \in \argmin_{\bw \in \mathcal{W}} \sum_{i\in\Itwo} |\hat{C}_{1-\alpha}(x_i, \bw, Q_{1-\alpha}^{(1)}(\bw))|,
    \end{align}
    where \(\mathcal{W}\) is a discrete grid over the \((d-1)\)-dimensional probability simplex \(\Delta^{d-1}\). We systematically explore candidate weights via grid search with step size \(\varepsilon = 0.01\), as detailed in Appendix~\ref{app:grid_search}.
    
    \item If $\Ithree\ne \Ione$, then $\Ithree$ is treated as the calibration set in Step~\ref{step:two:simple} in Algorithm~\ref{alg:simple}, and we obtain:
    \begin{align}\label{eq:quantile:three}
        Q_{1-\alpha}^{(2)} =  \lceil (1+&|\Ithree|)(1-\alpha) \rceil\text{-th smallest} \\ \text{value of } \quad &\nonumber \{\langle \hat\bw,s(x_i,y_i)\rangle: i\in\Ithree\}.
    \end{align}
    If $\Ione$ and $\Ithree$ are disjoint, $\hat{\bw}$ and $Q_{1-\alpha}^{(2)}$ are independent. If $\Ione=\Ithree$, then this quantile has been computed in~\eqref{eq:quantile:one}, i.e., $Q_{1-\alpha}^{(2)} = Q_{1-\alpha}^{(1)}(\hat\bw)$. 
    
    \item The final output is the confidence set for samples $i\in\Itest$ in the test set:
\begin{equation}\label{eq:prediction:test}
        \hat{C}_{1-\alpha}(x_i, \hat{\bw}, Q_{1-\alpha}^{(2)})= 
        \{y\in[K] : \langle \hat{\bw},s(x_i,y)\rangle\ge Q_{1-\alpha}^{(2)}\}.
\end{equation}
\end{enumerate}

This procedure comprises two primary steps: 
{\bf Threshold Calibration}: Determining appropriate thresholds using Equations~\eqref{eq:quantile:one} and~\eqref{eq:quantile:three} based on the calibration subsets $\mathcal{I}_1$ and $\mathcal{I}_3$. {\bf Prediction Set Evaluation}: Constructing the prediction sets for $\mathcal{I}_2$ and $\mathcal{I}_\text{test}$ as outlined in Equations~\eqref{eq:prediction:two} and~\eqref{eq:prediction:test}. These steps are encapsulated in Function~\ref{alg:cali} and Function~\ref{alg:eval}, respectively. The entire procedure of the proposed method is summarized in Algorithm~\ref{alg:weight}.

\begin{protocol}[!t]
\caption{Calibration}
\begin{algorithmic}[1]
\Function{Calibration}{$\boldsymbol{w}, \mathcal{I} \subseteq \mathcal{I}_{\textrm{train}}, \alpha$}
\For{$i \in \mathcal{I}_{\textrm{calib}}$}
\State Compute $\langle \bw, s(x_i,y_i)\rangle$ using $\bw$ and $(x_i, y_i)$.
\EndFor
\State $Q_{1-\alpha} \gets$ $\lceil (1+|\mathcal{I}|)(1-\alpha) \rceil$-th smallest score $\langle \bw, s(x_i,y_i)\rangle$ using $\bw$ and $(x_i, y_i)$.
\State \Return $Q_{1-\alpha}$.
\EndFunction
\end{algorithmic}\label{alg:cali}
\end{protocol}

\begin{protocol}[!t]
\caption{Evaluation}
\begin{algorithmic}[1]
\Function{Evaluation}{$\boldsymbol{w}, \mathcal{I} \subseteq \mathcal{I}_{\textrm{train}} \cup \mathcal{I}_{\textrm{test}}, Q_{1-\alpha}$}
\For{$i \in \mathcal{I}$}
\For{$y \in [K]$}
\State Compute weighted score $\langle \bw,s(x_i,y)\rangle$ 
\EndFor
\State $\hat{C}_{1-\alpha}(x_i) \gets \{y \in [K]: \langle \bw,s(x_i,y)\rangle \geq Q_{1-\alpha}\}$
\EndFor
\State \Return $\big(\hat{C}_{1-\alpha}(x_i) \big)_{i \in \mathcal{I}}$
\EndFunction
\end{algorithmic}\label{alg:eval}
\end{protocol}

\subsection{Data Splitting}\label{sec:splitting}

In Algorithm~\ref{alg:weight}, the method for splitting the data into $\mathcal{I}_1$, $\mathcal{I}_2$, and $\mathcal{I}_3$ has not been specified. To explore potential data splitting approaches, we first highlight the following two key observations. Firstly, After determining $\hat{\bw}$, the calibration procedure~\eqref{eq:quantile:three} and prediction set construction~\eqref{eq:prediction:test} mirror Step~\ref{step:two:simple} and Step~\ref{step:three:simple} of Algorithm~\ref{alg:simple}. Prior steps aim to identify a weight vector $\bw$ optimizing algorithm performance. Secondly, \eqref{eq:quantile:one} and~\eqref{eq:quantile:three} find quantiles, requiring sample labels. Equations~\eqref{eq:prediction:two} and~\eqref{eq:prediction:test} define prediction sets, needing only feature $x$. Thus, the test set can be included in~\eqref{eq:prediction:two} and~\eqref{eq:prediction:test}.

Based on the observations above, we introduce the following four possible ways of data splitting. 
\begin{itemize}
\item[(a)] Validity First Conformal Prediction (VFCP) \cite{yang2024selection}: $\Ione = \Itwo\subseteq \Itrain$, and $\Ithree = \Itrain \setminus \Ione$. Divides $\Itrain$ into two partitions. Marginal coverage probability is guaranteed to be at least $1-\alpha$ under exchangeability of samples in $\Ithree$ and $\Itest$.

\item[(b)] Efficiency First Conformal Prediction (EFCP) \cite{yang2024selection}: $\Ione = \Itwo = \Ithree = \Itrain$. Uses all training data to determine $\hat \bw$, resulting in more accurate estimation of optimal $\bw$. This method equires stronger assumptions for coverage guarantee.

\item[(c)] Data Leakage Conformal Prediction (DLCP): $\Ione = \Ithree = \Itrain$, and $\Itwo = \Itest$. Minimizes prediction set on $\Itest$ in finding $\hat\bw$ in~\eqref{eq:hat:w}. Called "data leakage" as it uses test data in training procedure of $\hat \bw$.

\item[(d)] Data Leakage Conformal Prediction+ (DLCP+): $\Ione = \Ithree = \Itrain$ and $\Itwo = \Itrain\cup\Itest$. Uses all available data in each step, including $\Itest$ in $\Itwo$ to maximize sample size for finding $\hat\bw$ in~\eqref{eq:hat:w}.
\end{itemize}

There are various possible methods for data splitting. In the following sections, we focus on these four specific approaches, examining their theoretical properties and evaluating their performance through experiments.

\section{Theoretical Analysis}\label{sec:theory}
\subsection{Overview of the Results}
 
 We investigate the theoretical properties of the proposed methods in terms of validity and efficiency.

\textbf{Validity:} We assess whether the output prediction set achieves the desired coverage rate of $1-\alpha$. For the VFCP method, the coverage rate is guaranteed under the exchangeability assumption. However, for the other three methods, establishing validity is more complex due to the selection bias introduced by $\hat{\bw}$. Consequently, our aim is to demonstrate that the optimization over $\bw \in \mathcal{W}$ has small impact on validity.

\textbf{Efficiency:} We first observe that the prediction set attains the smallest possible expected size only if the true conditional probabilities $p(y|x)$ are known. Given the predefined score functions, the optimal weight vector in the population sense is defined as
\begin{align*}
    \bw^* = &\ \arg\min_{\bw \in \mathcal{W}} \min_{q \in \mathbb{R}} \mathbb{E}\left[|\hat{C}_{1-\alpha}(X, \bw, q)|\right] \\
    \quad \text{s.t.} &\quad \mathbb{P}\left(Y \in \hat{C}_{1-\alpha}(X, \bw, q)\right) \geq 1 - \alpha.
\end{align*}
$\bw^*$ is the weight vector that minimizes the expected prediction set size while ensuring that, together with an appropriate threshold $q$, the prediction set maintains a desired coverage rate of $1-\alpha$. Equivalently, we can define
\small{
\begin{align*}
    Q_{1-\alpha}(\bw) := \sup\{q \in \mathbb{R} : \mathbb{P}(Y \in \hat{C}_{1-\alpha}(X,\bw, q)) \geq 1-\alpha\},
\end{align*}}
\normalsize
and using this definition, we define
\begin{align*}
    \bw^* \in \arg\min_{\bw \in \mathcal{W}} \mathbb{E}\left[|\hat{C}_{1-\alpha}(X, \bw, Q_{1-\alpha}(\bw))|\right].
\end{align*}
We define $q^* = Q_{1-\alpha}(\bw^*)$. Since $\bw^*$ and $q^*$ are deterministic once $\alpha$ and the true distribution are given, we can succinctly denote the optimal prediction set for $x$:
\begin{align*}
    \hat C_{1-\alpha}^*(x) := \hat{C}_{1-\alpha}(x, \bw^*, q^*). 
\end{align*}
Our analysis of efficiency focuses on the difference between the size of the prediction set produced by the proposed methods using $\hat{\bw}$ and the expected size of $\hat{C}^*_{1-\alpha}(x)$. In other words, we analyze the discrepancy between the prediction set sizes generated by $\hat{\bw}$ and the optimal $\bw^*$. We will demonstrate that this difference is negligible given a sufficiently large dataset.
    
We note that this analysis differs from verifying whether $\hat{\bw}$ converges to $\bw^*$, which would require assumptions about the smoothness of the function mapping the weight to the prediction set size. Instead, our analysis makes fewer assumptions on the optimal prediction set size.

\subsection{Results by Vapnik–Chervonenkis Theory}

We begin by defining the following events on the probability space of pairs of exchangeable random variables $(X_i, Y_i)_{i\in\mathcal{I}}$. The event $\Omega(\mathcal{I}, \eta)$ is defined as:
\begin{align}\label{eq:Omega:def}
    \begin{split}
    \sup_{\bw\in \mathbb R^d, q\in \mathbb R}& \Big| \frac 1{|\mathcal I|} \sum_{i\in\mathcal I} \mathbf 1\{\langle \bw, s(X_i, Y_i)\rangle\ge q\} \\ 
    &-\mathbb E[\mathbf 1\{\langle \bw, s(X, Y)\rangle\ge q\}]       \Big|\le \eta,
    \end{split}
\end{align}
where $(X,Y)$ has the same distribution as $(X_1, Y_1)$. 
$\Gamma(\mathcal{I},\xi)$ denotes the event 
\begin{align}\label{eq:Gamma:def}
    \begin{split}
    \sup_{\bw\in \mathbb{R}^d, q\in \mathbb{R}} \Big|  
        &\frac{1}{|\mathcal{I}|} \sum_{i\in\mathcal{I}} \sum_{y\in[K]}\mathbf{1}\{\langle \bw, s(X_i, y)\rangle\ge q\} \\  
        &- \mathbb{E}[\mathbf{1}\{\langle \bw, s(X, y)\rangle\ge q\}]\Big| \le \xi, 
    \end{split}
\end{align}
where we recall that $K$ is the number of classes of the classification task. 
$\Omega(\mathcal I,\eta)$ is about the uniform concentration of the coverage, and $\Gamma(\mathcal I,\xi)$ is about the uniform concentration of the prediction set size. This is because by the definition of $\hat C_{1-\alpha}$ in~\eqref{eq:quantile:one} or~\eqref{eq:quantile:three}, 
\begin{align*}
    \mathbf 1\{&\langle \bw , s(x, y)\rangle\ge q\}
    = \mathbf 1\big\{y\in \hat C_{1-\alpha}(x,\bw,q)\big\}
    \quad{\text{and}} \\ &
    \sum_{y\in[K]}\mathbf 1\{\langle \bw, s(x, y)\rangle\ge q\}
    = \big|\hat C_{1-\alpha}(x,\bw,q)\big|.
\end{align*}
The following lemma shows that these two events hold with high probability. 

\begin{lemma}\label{lem:vc:theory}
Suppose the samples in $\mathcal I$ are i.i.d., then
\begin{itemize}
    \item[(a)] $\Omega\left(\mathcal I, 8\sqrt{\frac{(d+1)\log(|\mathcal I|+1)}{|\mathcal I|}}+\delta\right)$ hold with probability at least $1-\exp\left(-\frac{|\mathcal I|\delta^2}{2}\right)$.
    \item[(b)] $\Gamma\left(\mathcal I, 8K\sqrt{\frac{(d+1)\log(|\mathcal I|+1)}{|\mathcal I|}}+K\delta\right)$ hold with probability at least $1-\exp\left(-\frac{|\mathcal I|\delta^2}{2}\right)$.
\end{itemize}
\end{lemma}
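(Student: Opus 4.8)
The plan is to read both displays~\eqref{eq:Omega:def} and~\eqref{eq:Gamma:def} as uniform deviations of empirical means over a single Vapnik--Chervonenkis class, and then to decouple the argument into an in-expectation bound (handled by symmetrization plus a combinatorial growth-function estimate) and a concentration step (handled by bounded differences). The structural observation driving everything is that if I set $z = s(x,y)\in\mathbb{R}^d$, then the decision $\langle \bw, s(x,y)\rangle \ge q$ is exactly membership of $z$ in the affine half-space $\{z:\langle\bw,z\rangle\ge q\}$ indexed by $(\bw,q)\in\mathbb{R}^{d+1}$. The class of such half-spaces in $\mathbb{R}^d$ has VC dimension $d+1$, and pulling back through the fixed map $s$ cannot create new sign patterns, so the composed class $\{(x,y)\mapsto \mathbf 1\{\langle\bw,s(x,y)\rangle\ge q\}\}$ has VC dimension at most $d+1$. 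By Sauer--Shelah its growth function is at most $(|\mathcal I|+1)^{d+1}$, which is the source of the factor $(d+1)\log(|\mathcal I|+1)$.

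For part (a), I would bound the expectation of the supremum in~\eqref{eq:Omega:def} by standard symmetrization, reducing it to a Rademacher average, and then apply Massart's finite-class lemma to the at most $(|\mathcal I|+1)^{d+1}$ distinct labelings realized on the sample. This produces a bound of the stated form $8\sqrt{(d+1)\log(|\mathcal I|+1)/|\mathcal I|}$, the constant $8$ comfortably absorbing the symmetrization factor, Massart's constant, and the handling of the absolute value. To pass from this in-expectation estimate to a high-probability one, I would note that the supremum in~\eqref{eq:Omega:def}, as a function of the i.i.d.\ sample, changes by at most $1/|\mathcal I|$ when a single $(X_i,Y_i)$ is replaced, since each summand lies in $[0,1]$. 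McDiarmid's bounded-differences inequality then yields the extra $+\delta$ deviation with failure probability at most $\exp(-|\mathcal I|\delta^2/2)$.

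For part (b), the relevant summand is $\sum_{y\in[K]}\mathbf 1\{\langle\bw,s(X_i,y)\rangle\ge q\}=|\hat C_{1-\alpha}(X_i,\bw,q)|$, now taking values in $[0,K]$. I would bound the expected supremum by a triangle inequality over $y$ followed by class-by-class symmetrization: each of the $K$ fixed-label classes $\{x\mapsto\mathbf 1\{\langle\bw,s(x,y)\rangle\ge q\}\}$ again has VC dimension at most $d+1$, so summing the $K$ per-label Rademacher complexities produces the factor $8K\sqrt{(d+1)\log(|\mathcal I|+1)/|\mathcal I|}$. Crucially, instead of union-bounding over the $K$ labels, which would cost a factor $K$ in the failure probability, I would apply McDiarmid directly to the aggregate supremum in~\eqref{eq:Gamma:def}: replacing one sample moves it by at most $K/|\mathcal I|$, so with target deviation $K\delta$ the bounded-differences inequality returns the very same failure probability $\exp(-|\mathcal I|\delta^2/2)$, with no $K$ in the exponent.

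The step I expect to be the main obstacle is precisely this coupling in part (b): injecting the factor $K$ into the complexity term (which naively invites a term-by-term union bound) while keeping $K$ out of the probability bound. The resolution is to keep the two operations separate, using the triangle inequality only inside the expectation computation and then treating the entire summed supremum as one bounded-differences functional with per-coordinate sensitivity $K/|\mathcal I|$. A secondary point needing care is confirming that composition with the possibly non-injective score map $s$ does not inflate the VC dimension; this holds because shattering the pairs $(x_i,y_i)$ reduces to shattering their images $s(x_i,y_i)$ by half-spaces, which only decreases the number of achievable labelings.
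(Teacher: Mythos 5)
Your proposal is correct, and for part (a) it is essentially the paper's argument: the paper also identifies the indicator class as the subgraph class of a $(d+1)$-dimensional vector space of functions (equivalently, your pulled-back half-spaces), bounds the Rademacher complexity by $4\sqrt{(d+1)\log(|\mathcal I|+1)/|\mathcal I|}$ via the polynomial-discrimination route, and converts to a high-probability statement with a bounded-differences (functional Hoeffding) inequality, exactly as you do with symmetrization, Massart, and McDiarmid. For part (b), however, you take a genuinely different and in fact sharper route. The paper defines per-label events $\Gamma_y(\mathcal I,\eta)$, applies part (a) to each fixed $y$, and then combines them with a union bound over $y\in[K]$; taken literally this yields failure probability $K\exp(-|\mathcal I|\delta^2/2)$ rather than the $\exp(-|\mathcal I|\delta^2/2)$ stated in the lemma, so the paper's argument loses a factor of $K$ in the probability (harmless asymptotically, but a real discrepancy with the statement). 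Your version keeps the triangle inequality over $y$ strictly inside the expectation --- summing $K$ per-label Rademacher complexities to get the $8K\sqrt{\cdot}$ term --- and then applies McDiarmid once to the aggregate supremum in~\eqref{eq:Gamma:def}, whose per-sample sensitivity is $K/|\mathcal I|$; with target deviation $K\delta$ the $K$'s cancel in the exponent and you recover (indeed improve on) the stated probability with no union-bound penalty. So your decoupling of ``complexity grows with $K$'' from ``concentration does not'' is not just a stylistic choice: it is what actually delivers the constant claimed in part (b). Your secondary point about composition with a non-injective $s$ not inflating the VC dimension is also correct and is implicit in the paper's appeal to subgraph classes.
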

The proof of this lemma appears in Section~\ref{sec:proof:lem} in the appendix. 
This lemma establishes that $\Omega(\mathcal{I}, \eta)$ and $\Gamma(\mathcal{I}, \xi)$ hold with high probability provided that $\eta, \xi \gtrsim \sqrt{\frac{d \log |\mathcal{I}|}{|\mathcal{I}|}}$. The proof, presented in the appendix, applies subgraph classes from Vapnik-Chervonenkis (VC) theory. Additionally, if the inequalities in Equations~\eqref{eq:Omega:def} and~\eqref{eq:Gamma:def} hold for $\bw \in \mathbb{R}^d$, they naturally extend to $\bw \in \mathcal{W} \subset \mathbb{R}^d$.

\begin{remark}
    Part (b) of Lemma~\ref{lem:vc:theory} limits the generalization of our current theoretical results to classification tasks. Compared to the bound in Equation~\eqref{eq:Omega:def}, the bound in Equation~\eqref{eq:Gamma:def} involves a summation over $[K]$. In the proof, this is handled using a union bound. However, for regression tasks where prediction set size is typically measured by the Lebesgue measure, a union bound over $K$ is not directly applicable. Therefore, our current work focuses exclusively on classification tasks. Extending the proposed method to regression tasks is left for future research.
\end{remark}

\subsection{Consistency of VFCP}

The statistical guarantees of VFCP requires a few additional assumptions, e.g., the continuity between the quantile and the prediction set. To get rid off such assumptions and present the result in a neater way, we will only prove a modified version of VFCP. We will change $\alpha$ in~\eqref{eq:quantile:one} to a slightly smaller $\alpha'$. The condition for $\alpha'$ will be specified in the theorem. 

\begin{theorem}\label{thm:VFCP}
    Let $\CVF(x)$ be the output of Algorithm~\ref{alg:weight} with the setting of VFCP, and modified as mentioned above. Suppose the samples in $\{(X_i, Y_i)\}_{i\in\Ithree\cup\Itest}$ are exchangeable, 
    then for $(X,Y)$ in the test set, the coverage probability
    \begin{align*}
        \mathbb P\left(Y\in\CVF(X)\right)\ge 1-\alpha.
    \end{align*}
    Moreover, if $\Omega(\Ione, \eta_1), \Omega(\Ithree, \eta_3)$ and $\Gamma(\Itwo, \xi_2)$ are satisfied, and $\alpha'+\eta_1+\eta_3\le\alpha$, then for test sample $X$, 
    {\footnotesize
    \begin{align*}
        \mathbb E\left[\big|\CVF(X)\big| \right]
        \le \mathbb E\left[ \big|\Cs_{1-\alpha_1}(X)\big| \right]+2\xi_2,
    \end{align*}
    where $1-\alpha_1 = \frac 1{|\Ithree|}\lceil(1+|\Ithree|)(1-\alpha')\rceil-\eta_3.$}
\end{theorem}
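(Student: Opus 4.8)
The plan is to prove the two assertions by entirely different means: the coverage bound is a finite-sample exchangeability statement that needs no VC input, whereas the size bound is a deterministic estimate valid on the intersection of the three events of Lemma~\ref{lem:vc:theory}.

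For coverage, the decisive structural fact is that in the VFCP split $\Ione=\Itwo$ is disjoint from $\Ithree$, so $\hat\bw$, being a function of the data indexed by $\Ione$ only, is independent of $\{(X_i,Y_i)\}_{i\in\Ithree\cup\Itest}$. First I would condition on $\hat\bw=\bw$. Then $\big(\langle\bw,s(X_i,Y_i)\rangle\big)_{i\in\Ithree\cup\Itest}$ is an exchangeable family of real numbers and $Q^{(2)}_{1-\alpha}$ is exactly the conformal order statistic of the $\Ithree$-scores from \eqref{eq:quantile:three}; the standard fact that the rank of the test score among the $|\Ithree|+1$ values is uniform gives $\mathbb P(Y\in\CVF(X)\mid\hat\bw=\bw)\ge 1-\alpha$, and integrating over $\hat\bw$ closes the claim. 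The replacement of $\alpha$ by $\alpha'$ touches only the weight-selection stage on $\Ione$, not this independent calibration, so it leaves the coverage guarantee intact.

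For the size bound I would work deterministically on $\Omega(\Ione,\eta_1)\cap\Omega(\Ithree,\eta_3)\cap\Gamma(\Itwo,\xi_2)$, writing $S(\bw,q)=\mathbb E|\hat C_{1-\alpha}(X,\bw,q)|$ and $\hat S(\bw,q)$ for its $\Itwo$-empirical version, using throughout that both are non-increasing in $q$ and that $\Gamma(\Itwo,\xi_2)$ gives $|\hat S-S|\le\xi_2$ uniformly in $(\bw,q)$. The chain is: (i) $S(\hat\bw,Q^{(2)}_{1-\alpha})\le\hat S(\hat\bw,Q^{(2)}_{1-\alpha})+\xi_2$; (ii) $\hat S(\hat\bw,Q^{(2)}_{1-\alpha})\le\hat S(\hat\bw,Q^{(1)}_{1-\alpha'}(\hat\bw))$ by monotonicity, once $Q^{(2)}_{1-\alpha}\ge Q^{(1)}_{1-\alpha'}(\hat\bw)$ is known; (iii) $\le\hat S(\bw^{*},Q^{(1)}_{1-\alpha'}(\bw^{*}))$ by the defining optimality \eqref{eq:hat:w} of $\hat\bw$, with $\bw^{*}$ the oracle weight at level $1-\alpha_1$; (iv) $\le S(\bw^{*},Q^{(1)}_{1-\alpha'}(\bw^{*}))+\xi_2$ by $\Gamma$ again; and (v) $\le S(\bw^{*},Q_{1-\alpha_1}(\bw^{*}))=\mathbb E|\Cs_{1-\alpha_1}(X)|$ by monotonicity, once $Q^{(1)}_{1-\alpha'}(\bw^{*})\ge Q_{1-\alpha_1}(\bw^{*})$ is known. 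The two invocations of $\Gamma$ in (i) and (iv) are what produce the stated $2\xi_2$.

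The hard part will be the two quantile orderings in steps (ii) and (v), which must be obtained without any continuity hypothesis on the law of the scores; this is exactly where the two $\Omega$ events and the budget $\alpha'+\eta_1+\eta_3\le\alpha$ are spent. The plan is to pass through population coverage: $\Omega(\Ione,\eta_1)$ pins the population coverage at the selection quantile $Q^{(1)}_{1-\alpha'}$ to within $\eta_1$ of its empirical level, while $\Omega(\Ithree,\eta_3)$ does the same for the calibration quantile $Q^{(2)}_{1-\alpha}$ on $\Ithree$; since $q\mapsto\mathbb P(\langle\bw,s(X,Y)\rangle\ge q)$ is non-increasing, comparing these two certified coverages forces the required ordering of the quantiles, and the inequality goes through precisely because the gap $\alpha-\alpha'$ dominates $\eta_1+\eta_3$. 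Reading off the population coverage that the argument can certify net of the concentration error then fixes the effective comparison level as the quantity $1-\alpha_1=\tfrac1{|\Ithree|}\lceil(1+|\Ithree|)(1-\alpha')\rceil-\eta_3$ appearing in the statement. The only remaining delicacies are ties in the scores and the $\sup$ in the definition of $Q_{1-\alpha_1}(\bw)$, which I would absorb by fixing the strict-versus-weak conventions in the level sets so that monotonicity of $S$ in $q$ is preserved at the boundary.
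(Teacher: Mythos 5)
Your proposal is correct and follows essentially the same route as the paper: coverage via the independence of $\hat\bw$ from $\{(X_i,Y_i)\}_{i\in\Ithree\cup\Itest}$ and the standard exchangeability rank argument (the paper's Lemma~\ref{lem:coverage:exchangeable}), and the size bound via the chain empirical-to-population ($\Gamma$), quantile monotonicity, optimality of $\hat\bw$, and the $\Omega$ events to order the empirical and population quantiles (the paper's Lemma~\ref{lem:prediction:set} plus one more application of $\Gamma(\Itwo,\xi_2)$, giving the $2\xi_2$). If anything, your step (ii) is more complete than the paper, which assumes the ordering $Q_{1-\alpha'}(\hat\bw,\Ione)\le Q_{1-\alpha}(\hat\bw,\Ithree)$ as a hypothesis of its lemma and never explicitly verifies it from the budget condition $\alpha'+\eta_1+\eta_3\le\alpha$ as you propose to do.
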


The proofs of all theorems appear in Section~\ref{sec:prelim:lemma} and Section~\ref{sec:proof:theorem} and the appendix. 

The validity of VFCP requires minimal assumption (exchangeability), while it is less efficient than other method. We will verify this fact in empirical studies. 

\subsection{Consistency of EFCP}

In the setting of EFCP, $\Ione=\Itwo=\Ithree=\Itrain$. We denote this set by $\Itrain$. 

\begin{theorem}\label{thm:EFCP}
    Let $\CEF(x)$ be the output of Algorithm~\ref{alg:weight} with the setting of EFCP. Suppose the data in $\Itrain$ and $\Itest$ are independent and $\Omega(\Itrain,\eta_\text{train})$ is satisfied.  
    Then for $(X,Y)$ in the test set, the coverage probability
    \begin{align*}
        &\mathbb P\left(Y\in\CEF(X) \right)\ge 1-\alpha_1
        \quad \text{where} \\
        1&-\alpha_1 = \frac 1{|\Itrain|}\lceil(1+|\Itrain|)(1-\alpha)\rceil-\eta_\text{train}.
    \end{align*}
    Moreover, if $\Gamma(\Itrain, \xi_{\text{train}})$ is satisfied, then for any $X$ in the test set, 
    \begin{align*}
        \mathbb E\left[\big|\CEF(X)\big| \right]
        \le \mathbb E\left[ \big|\Cs_{1-\alpha_1}(X)\big| \right]+2\xi_{\text{train}}. 
    \end{align*}
\end{theorem}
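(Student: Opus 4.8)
The plan is to mirror the structure used for VFCP, the essential new point being that EFCP reuses $\Itrain$ for calibration, for the size‑minimization, and (since $\Ione=\Ithree$) for the final threshold, so that no fresh calibration sample is available and the selection bias from optimizing over $\bw$ must be absorbed through the \emph{uniform} guarantees of Lemma~\ref{lem:vc:theory}. Throughout I would work on the assumed events $\Omega(\Itrain,\eta_\text{train})$ and $\Gamma(\Itrain,\xi_\text{train})$ and condition on $\Itrain$: because $\Itrain$ and $\Itest$ are independent and $\hat\bw,Q_{1-\alpha}^{(2)}$ are functions of $\Itrain$ alone, a test point $(X,Y)$ retains its marginal law given $\Itrain$, so both the coverage and the expected size of $\CEF$ equal the population functionals evaluated at the data‑dependent pair $(\hat\bw,Q_{1-\alpha}^{(2)})$.

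For validity, note that $Q_{1-\alpha}^{(2)}=Q_{1-\alpha}^{(1)}(\hat\bw)$ is the $\lceil(1+|\Itrain|)(1-\alpha)\rceil$-th order statistic of $\{\langle\hat\bw,s(X_i,Y_i)\rangle\}_{i\in\Itrain}$, so by the defining property of this quantile the empirical coverage of $\CEF$ on $\Itrain$ is at least $\frac1{|\Itrain|}\lceil(1+|\Itrain|)(1-\alpha)\rceil$. Since $\Omega(\Itrain,\eta_\text{train})$ controls the deviation between empirical and population coverage uniformly over $(\bw,q)$, I may instantiate it at the realized pair $(\hat\bw,Q_{1-\alpha}^{(2)})$ to obtain
\begin{align*}
\mathbb P\bigl(Y\in\CEF(X)\mid\Itrain\bigr)\ge\frac1{|\Itrain|}\lceil(1+|\Itrain|)(1-\alpha)\rceil-\eta_\text{train}=1-\alpha_1,
\end{align*}
and removing the conditioning gives the first claim.

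For efficiency, write $\hat S(\bw,q)=\frac1{|\Itrain|}\sum_{i\in\Itrain}|\hat C_{1-\alpha}(X_i,\bw,q)|$ and $S(\bw,q)=\mathbb E[|\hat C_{1-\alpha}(X,\bw,q)|]$, so that $\Gamma(\Itrain,\xi_\text{train})$ reads $\sup_{\bw,q}|\hat S-S|\le\xi_\text{train}$ and $\mathbb E[|\CEF(X)|\mid\Itrain]=S(\hat\bw,Q_{1-\alpha}^{(2)})$. I would then run the chain
\begin{align*}
S(\hat\bw,Q_{1-\alpha}^{(2)})
&\le\hat S(\hat\bw,Q_{1-\alpha}^{(2)})+\xi_\text{train}
\le\hat S(\bw^*,Q_{1-\alpha}^{(1)}(\bw^*))+\xi_\text{train}\\
&\le S(\bw^*,Q_{1-\alpha}^{(1)}(\bw^*))+2\xi_\text{train},
\end{align*}
where the outer two inequalities are $\Gamma$ applied at $(\hat\bw,Q_{1-\alpha}^{(2)})$ and at $(\bw^*,Q_{1-\alpha}^{(1)}(\bw^*))$, and the middle inequality is the defining optimality of $\hat\bw=\argmin_{\bw\in\mathcal W}\hat S(\bw,Q_{1-\alpha}^{(1)}(\bw))$ together with $\bw^*\in\mathcal W$ (recall $Q_{1-\alpha}^{(2)}=Q_{1-\alpha}^{(1)}(\hat\bw)$ for EFCP). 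It then remains to show $S(\bw^*,Q_{1-\alpha}^{(1)}(\bw^*))\le S(\bw^*,q^*)=\mathbb E[|\Cs_{1-\alpha_1}(X)|]$, which I would obtain from monotonicity of $S(\bw^*,\cdot)$ in the threshold, once the empirically calibrated threshold $Q_{1-\alpha}^{(1)}(\bw^*)$ is shown to sit on the correct side of the population‑optimal $q^*=Q_{1-\alpha_1}(\bw^*)$; this ordering is produced by applying $\Omega(\Itrain,\eta_\text{train})$ at $(\bw^*,Q_{1-\alpha}^{(1)}(\bw^*))$ and invoking the definition $1-\alpha_1=\frac1{|\Itrain|}\lceil(1+|\Itrain|)(1-\alpha)\rceil-\eta_\text{train}$. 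Taking expectation over $\Itrain$ then yields the stated bound.

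The main obstacle is precisely this last threshold‑comparison step: the algorithm selects $\hat\bw$ using the \emph{empirical}, level‑$\alpha$ calibration $Q_{1-\alpha}^{(1)}$, whereas the benchmark $\Cs_{1-\alpha_1}$ is governed by the \emph{population}, level‑$(1-\alpha_1)$ threshold $q^*$, and the definition of $\alpha_1$ (absorbing exactly one factor $\eta_\text{train}$ of calibration slack) is what reconciles the two so that monotonicity delivers the size inequality in the required direction. Everything else is bookkeeping, contingent only on being allowed to instantiate the uniform bounds of Lemma~\ref{lem:vc:theory} at the data‑dependent arguments $\hat\bw$ and $Q_{1-\alpha}^{(2)}$; charging $\Gamma$ at two distinct points is what produces the total slack $2\xi_\text{train}$.
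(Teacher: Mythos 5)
Your proposal follows essentially the same route as the paper: the coverage claim is the paper's Lemma~\ref{lem:coverage:independent} (uniform concentration $\Omega$ instantiated at the data-dependent pair $(\hat\bw,Q_{1-\alpha}^{(2)})$, using independence of $\Itrain$ and $\Itest$), and your three-step size chain --- $\Gamma$ at $(\hat\bw,Q^{(2)}_{1-\alpha})$, optimality of $\hat\bw$, $\Gamma$ at $(\bw^*,Q^{(1)}_{1-\alpha}(\bw^*))$, then a quantile-ordering step via $\Omega$ --- is exactly the content of Lemma~\ref{lem:prediction:set} plus the one extra application of $\Gamma$ in the paper's theorem proof, yielding the same $2\xi_{\text{train}}$.

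One caveat on your final threshold-comparison step: $\Omega(\Itrain,\eta_{\text{train}})$ gives that population frequency $\ge c+\eta_{\text{train}}$ implies empirical frequency $\ge c$, hence $Q^{(1)}_{1-\alpha}(\bw^*)\ge Q_{1-\tilde\alpha}(\bw^*)$ only for $1-\tilde\alpha = \frac{1}{|\Itrain|}\lceil(1+|\Itrain|)(1-\alpha)\rceil+\eta_{\text{train}}$, not for the level $c-\eta_{\text{train}}$ you invoke; so the size bound is naturally against $\Cs_{1-\tilde\alpha}$ (a \emph{larger} benchmark set) rather than $\Cs_{1-\alpha_1}$ with the minus sign. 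This is precisely the sign that appears in the paper's own Lemma~\ref{lem:prediction:set} ($+\eta_1$), and the discrepancy with the theorem's stated $-\eta_{\text{train}}$ is already present in the paper; you have simply inherited it rather than introduced it.
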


\subsection{Consistency of DLCP}

In the setting of DLCP, $\Ione=\Ithree=\Itrain$ and $\Itwo=\Itest$. We will denote this by $\Itrain$ and $\Itest$ respectively. In this case, the test set and $\hat\bw$ become dependent. The prediction set size is presented differently than previous results. 

\begin{theorem}\label{thm:DLCP}
    Let $\CDL(x))$ be the output of Algorithm~\ref{alg:weight} for a test sample $x$ with the setting of DLCP. 
    Suppose $\Omega(\Itrain, \eta_\text{train})$ and $\Omega(\Itest, \eta_{\text{test}})$ hold, then the coverage proportion satisfies
    {\footnotesize
    \begin{align*}
        \frac 1{|\Itest|}& \sum_{i\in\Itest} \mathbf 1\{y_i\in \CDL(x_i)\} \ge 1-\alpha_1-\eta_{\text{test}}
        \quad\text{where} \\ &
        1-\alpha_1 = \frac 1{|\Itrain|}\lceil(1+|\Itrain|)(1-\alpha)\rceil-\eta_{\text{train}}.
    \end{align*}}
    In addition, if $\Gamma(\Itest,\xi_{\text{test}})$ holds, then
    \begin{align*}
            \frac 1{|\Itest|} \sum_{i\in\Itest} |\CDL(x_i)|
        \le \mathbb E\left[  \big|\Cs_{1-\alpha_1}(X)\big| \right]+\xi_{\text{test}}.
        \end{align*}
\end{theorem}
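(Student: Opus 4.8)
The plan is to follow the same skeleton as the proof of Theorem~\ref{thm:EFCP}, but to phrase every conclusion as an empirical average over $\Itest$ rather than as a population expectation. The reason is structural: in DLCP the selected weight $\hat\bw$ is a function of the test \emph{features} $\{x_i\}_{i\in\Itest}$ (via the minimization in \eqref{eq:hat:w} with $\Itwo=\Itest$), so $\hat\bw$ is not independent of $(X_i,Y_i)_{i\in\Itest}$ and we cannot average over a fresh test point. The single device that makes the whole argument go through is that $\Omega(\cdot,\cdot)$ and $\Gamma(\cdot,\cdot)$ are \emph{uniform} over all $(\bw,q)\in\mathbb R^d\times\mathbb R$; hence they may be instantiated at the data-dependent pair $(\hat\bw,Q^{(2)})$, where $Q^{(2)}=Q^{(1)}_{1-\alpha}(\hat\bw)$ since $\Ione=\Ithree=\Itrain$. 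This is precisely what neutralizes the ``data leakage'' dependence.

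For the coverage proportion I would argue in three steps. First, a purely deterministic order-statistic fact: because $Q^{(1)}_{1-\alpha}(\bw)$ is defined in \eqref{eq:quantile:one} as the $\lceil(1+|\Itrain|)(1-\alpha)\rceil$-th smallest calibration value, the empirical coverage of $(\bw,Q^{(1)}_{1-\alpha}(\bw))$ on $\Itrain$ is at least $\frac1{|\Itrain|}\lceil(1+|\Itrain|)(1-\alpha)\rceil$ for \emph{every} $\bw$ simultaneously, and in particular for $\hat\bw$. Second, I would invoke $\Omega(\Itrain,\eta_{\text{train}})$ at $(\hat\bw,Q^{(2)})$ to turn this empirical statement into the population bound $\mathbb P(\langle\hat\bw,s(X,Y)\rangle\ge Q^{(2)})\ge 1-\alpha_1$, which is exactly where the shifted level $1-\alpha_1=\frac1{|\Itrain|}\lceil(1+|\Itrain|)(1-\alpha)\rceil-\eta_{\text{train}}$ is produced. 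Third, I would apply $\Omega(\Itest,\eta_{\text{test}})$ at the same pair in the reverse direction, converting population coverage back into the empirical coverage proportion on $\Itest$ at an additional cost $\eta_{\text{test}}$; since $\sum_{y}\mathbf 1\{\langle\hat\bw,s(x_i,y)\rangle\ge Q^{(2)}\}$ identifies $\mathbf 1\{y_i\in\CDL(x_i)\}$ with $\mathbf 1\{\langle\hat\bw,s(x_i,y_i)\rangle\ge Q^{(2)}\}$, this yields $\frac1{|\Itest|}\sum_{i\in\Itest}\mathbf 1\{y_i\in\CDL(x_i)\}\ge 1-\alpha_1-\eta_{\text{test}}$.

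For the size bound the entry point is the defining optimality of $\hat\bw$ in \eqref{eq:hat:w}: it minimizes the average set size over $\Itwo=\Itest$, so $\frac1{|\Itest|}\sum_{i\in\Itest}|\CDL(x_i)|\le \frac1{|\Itest|}\sum_{i\in\Itest}|\hat C_{1-\alpha}(x_i,\bw^*,Q^{(1)}_{1-\alpha}(\bw^*))|$. I would then apply $\Gamma(\Itest,\xi_{\text{test}})$ at $(\bw^*,Q^{(1)}_{1-\alpha}(\bw^*))$, a fixed pair determined by $\Itrain$, to pass from this test-empirical size to its population counterpart $\mathbb E[|\hat C_{1-\alpha}(X,\bw^*,Q^{(1)}_{1-\alpha}(\bw^*))|]$ at the cost of a single $\xi_{\text{test}}$. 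The final step is to identify this population size with $\mathbb E[|\Cs_{1-\alpha_1}(X)|]$, using that the coverage argument already calibrates $\bw^*$ to population level $1-\alpha_1$ together with the definition $q^*=Q_{1-\alpha_1}(\bw^*)$ and the monotonicity of the set size in the threshold.

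The hard part will be this last identification, because it sits on a genuine tension: the coverage analysis forces $Q^{(1)}_{1-\alpha}(\bw^*)$ to deliver population coverage at least $1-\alpha_1$, which (by monotonicity of size in the threshold) pushes its population size \emph{above} that of the size-optimal set $\Cs_{1-\alpha_1}$, whereas the theorem wants it bounded \emph{above} by $\mathbb E[|\Cs_{1-\alpha_1}(X)|]$. The reconciliation must come from the deliberate offset by $\eta_{\text{train}}$ in the definition of $\alpha_1$, which aligns the data-driven empirical quantile with the population-optimal threshold $q^*$, so that the residual over-coverage contributes no excess set size beyond what the statement allows; making this precise while tracking ties and discreteness of the order statistic (and, if needed, a continuity/no-ties condition so that the offset does not consume extra slack) is the delicate point. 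A secondary issue to dispatch is that $\bw^*$ must be (approximately) representable in the grid $\mathcal W$, so that the comparison in the minimization step is legitimate up to the grid resolution $\varepsilon$.
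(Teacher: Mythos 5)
Your proposal follows essentially the same route as the paper, which proves this theorem by citing Lemma~\ref{lem:coverage:dependent} (whose proof instantiates the uniform events $\Omega(\Itrain,\cdot)$ and $\Omega(\Itest,\cdot)$ at the data-dependent pair $(\hat\bw, Q^{(2)}_{1-\alpha})$, exactly your three coverage steps) together with Lemma~\ref{lem:prediction:set} (optimality of $\hat\bw$ over $\Itwo=\Itest$, then the quantile comparison via $\Omega(\Ione,\cdot)$, then $\Gamma(\Itest,\cdot)$ at the resulting fixed pair — the same steps as yours in a slightly different order). The ``tension'' you flag in the final identification is genuine and is present in the paper itself: Lemma~\ref{lem:prediction:set} delivers the size bound at level $1-\alpha_1=\frac{1}{|\Ione|}\lceil(1+|\Ione|)(1-\alpha')\rceil+\eta_1$ (the plus sign being exactly what is needed for $Q^{(1)}_{1-\alpha}(\bw^*)\ge Q_{1-\alpha_1}(\bw^*)$ and hence set containment), whereas the theorem states the level with $-\eta_{\text{train}}$, so your instinct that this sign does not come for free is correct.
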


\subsection{Consistency of DLCP+}

Similar as the DLCP setting, $\hat \bw$ depends on the test set, so the result of prediction set size is presented in a similar way. In the following theorem, we use $\Itwo$ to denote $\Itrain\cup\Itest$. 

\begin{theorem}\label{thm:DLCPP}
    Let $\CDLP(x,\hat \bw))$ be the output of Algorithm~\ref{alg:weight} with the setting of DLCP+. 
    Suppose $\Omega(\Itrain, \eta_\text{train})$ and $\Omega(\Itest, \eta_{\text{test}})$ hold, then the coverage proportion 
    \begin{align*}
        \frac 1{|\Itest|} \sum_{i\in\Itest} \mathbf 1\{y_i\in \CDLP(x_i,\hat\bw)\} \ge 1-\alpha'-\eta_{\text{test}}
        \\ \text{where}\quad
        1-\alpha' = \frac 1{|\Itrain|}\lceil(1+|\Itrain|)(1-\alpha)\rceil-\eta_{\text{train}}.
    \end{align*}
    In addition, if $\Gamma(\Itwo, \xi_2)$ holds, then for test sample $X$, 
    \begin{align*}
        \mathbb E[|\CDLP(X)|]\le\mathbb E\left[  \big|\Cs_{1-\alpha'}(X)\big| \right]+2\xi_2.
    \end{align*}
\end{theorem}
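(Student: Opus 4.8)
The plan is to follow the template of the DLCP proof (Theorem~\ref{thm:DLCP}), modifying only the size argument to accommodate the enlarged evaluation set $\Itwo=\Itrain\cup\Itest$. Throughout I would condition on the high-probability events $\Omega(\Itrain,\eta_\text{train})$, $\Omega(\Itest,\eta_\text{test})$ and $\Gamma(\Itwo,\xi_2)$ supplied by Lemma~\ref{lem:vc:theory}. The decisive feature to exploit is that each of these is a \emph{uniform} statement over all $\bw\in\mathbb{R}^d$ and $q\in\mathbb{R}$, so it continues to hold at the data-dependent pair $(\hat\bw,Q_{1-\alpha}^{(2)})$ even though, in the DLCP+ setting, $\hat\bw$ is computed from data that include the test features (since $\Itest\subseteq\Itwo$). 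This is what lets the selection bias of $\hat\bw$ be absorbed without any smoothness or convergence assumption on $\bw\mapsto\hat\bw$.

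For the coverage bound the argument is essentially verbatim that of DLCP, since the calibration set $\Ithree=\Itrain$ and the quantile~\eqref{eq:quantile:three} are unchanged. First I would observe that, by the construction of $Q_{1-\alpha}^{(2)}=Q_{1-\alpha}^{(1)}(\hat\bw)$ as the calibration quantile on $\Itrain$, the empirical coverage of $(\hat\bw,Q_{1-\alpha}^{(2)})$ over $\Itrain$ is at least $\frac{1}{|\Itrain|}\lceil(1+|\Itrain|)(1-\alpha)\rceil$. Applying $\Omega(\Itrain,\eta_\text{train})$ at the realized $(\hat\bw,Q_{1-\alpha}^{(2)})$ transfers this to the population statement $\mathbb{P}(\langle\hat\bw,s(X,Y)\rangle\ge Q_{1-\alpha}^{(2)})\ge 1-\alpha'$, with $1-\alpha'$ as defined in the theorem. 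A second application, this time of $\Omega(\Itest,\eta_\text{test})$ at the same pair, pushes the population coverage back to the empirical coverage on $\Itest$ at a further cost of $\eta_\text{test}$, giving the claimed $1-\alpha'-\eta_\text{test}$.

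For the size bound the new ingredient is that the left-hand side is a \emph{population} expectation $\mathbb{E}[|\CDLP(X)|]$, which is why $\Gamma(\Itwo,\xi_2)$ is invoked twice and the slack is $2\xi_2$ rather than the single $\xi_\text{test}$ of DLCP (whose left-hand side is already an empirical average over $\Itest$). The chain I would write is: (i) by $\Gamma(\Itwo,\xi_2)$ at $(\hat\bw,Q_{1-\alpha}^{(2)})$, $\mathbb{E}[|\CDLP(X)|]$ is at most the empirical average size of $(\hat\bw,Q_{1-\alpha}^{(2)})$ over $\Itwo$ plus $\xi_2$; (ii) by the defining optimality of $\hat\bw$ in~\eqref{eq:hat:w}, this empirical average is no larger than that of $(\bw^*,Q_{1-\alpha}^{(1)}(\bw^*))$, where $\bw^*\in\mathcal{W}$ is the population-optimal weight at the relaxed level $1-\alpha'$; (iii) a second use of $\Gamma(\Itwo,\xi_2)$ at $(\bw^*,Q_{1-\alpha}^{(1)}(\bw^*))$ returns this to $\mathbb{E}[|\hat C_{1-\alpha}(X,\bw^*,Q_{1-\alpha}^{(1)}(\bw^*))|]+2\xi_2$. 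It then remains to replace the data-calibrated threshold $Q_{1-\alpha}^{(1)}(\bw^*)$ by the population-optimal threshold $q^*$ that defines $\Cs_{1-\alpha'}$, which is exactly the comparison already isolated in the preliminary lemmas of Section~\ref{sec:prelim:lemma}.

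The hard part is the interface between steps (ii) and (iii): controlling the optimization bias from selecting $\hat\bw$ on data containing $\Itest$. This is precisely what forces the VC-type uniform bound $\Gamma(\Itwo,\cdot)$ in place of a fixed-weight concentration inequality, and it is responsible for the two empirical-to-population passes that produce the $2\xi_2$. A second, genuine subtlety, shared with the EFCP and DLCP size bounds, is the threshold matching in the final step: the empirical calibration targets level $1-\alpha$ yet certifies only population coverage $\ge 1-\alpha'$, so one must argue that substituting $q^*$ for $Q_{1-\alpha}^{(1)}(\bw^*)$ does not inflate the expected size beyond the stated slack. The relaxation from $1-\alpha$ to $1-\alpha'$ in the definition of $\Cs_{1-\alpha'}$ is exactly what aligns the two thresholds through their population coverage levels and closes this gap; I would import this step from the preliminary lemmas rather than re-derive it here.
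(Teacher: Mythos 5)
Your proposal is correct and follows essentially the same route as the paper: the coverage claim is exactly the two-step transfer (empirical on $\Itrain$ $\to$ population via $\Omega(\Itrain,\eta_{\text{train}})$ $\to$ empirical on $\Itest$ via $\Omega(\Itest,\eta_{\text{test}})$) that the paper packages as Lemma~\ref{lem:coverage:dependent}, and your size chain (i)--(iii) with the final threshold substitution is precisely the paper's two-inequality display, which compresses the optimality-of-$\hat\bw$ step and the quantile comparison into a reference to Lemma~\ref{lem:prediction:set}. You have merely unpacked what the paper states tersely, including correctly identifying that the $2\xi_2$ arises from the two empirical-to-population passes over $\Itwo$.
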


\subsection{Conclusion of Theoretical Results}

We integrate the results of Lemma~\ref{lem:vc:theory} with our main theorems. The Lemma~\ref{lem:vc:theory} indicates that as the size of the set $\mathcal{I}$ approaches infinity, the events $\Omega(\mathcal{I}, \eta)$ and $\Gamma(\mathcal{I}, \xi)$ hold with probability $1-o(1)$ even if $\eta, \xi \to 0$ is at a sufficient slow rate. Consequently, the terms $\eta_i$ and $\xi_i$ in our theorems become negligible. This suggests that the proposed methods achieve coverage rates close to $1-\alpha$ and exhibit near-optimal efficiency.

\section{Experiments}\label{sec:experiment}

\subsection{Score Weighting}

We conducted an experiment to compare the performance of various score functions score functions discussed in Section~\ref{sec:score:functions}: THR \cite{sadinle2019least}, APS \cite{romano2020classification}, and RANK \cite{luo2024trustworthy} given by a pretrained classifier $\hat p_y(x)$. Additional details about implementation can be found in Section~\ref{sec:score} in the appendix.  Throughout the experiment, we assumed that a pretrained classifier $\hat p_y(x)$ was available, and the split of the dataset unseen during the training of the pretrained classifier.

\begin{figure}[h]
    \centering
    \includegraphics[width=0.9\linewidth]{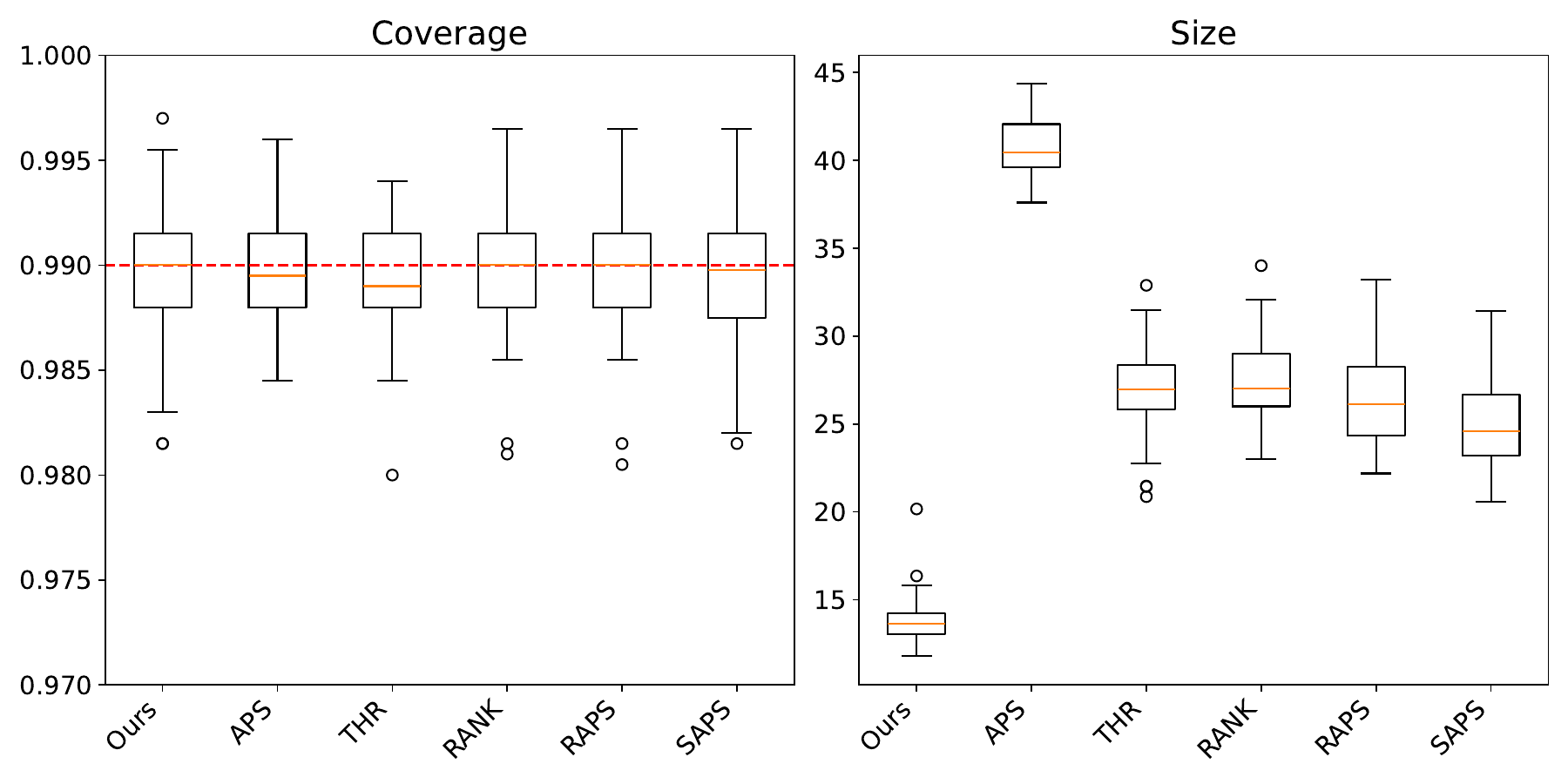}
    \caption{Boxplot comparison of different score functions at a significance level of $\alpha=0.01$ on CIFAR-100. Our weighted combination method achieves the guaranteed coverage of 99\% while maintaining the smallest prediction set size.}
    \label{fig:boxplot}
\end{figure}
\normalsize

In the experiments on CIFAR-10 and CIFAR-100, testing images, which were not used during the pretraining of the model, were used as the $\Itrain$ and $\Itest$ sets. The experiments were performed for different significance levels $\alpha$ ranging from 0.01 to 0.05. %
100 runs with different index splits were conducted to ensure robustness. We have additional experiments on data splitting ratio, which can be found in Section~\ref{sec:add:exp} in the supplimentary file. 
\begin{figure}[t]
    \centering
    \includegraphics[width=0.46\linewidth]{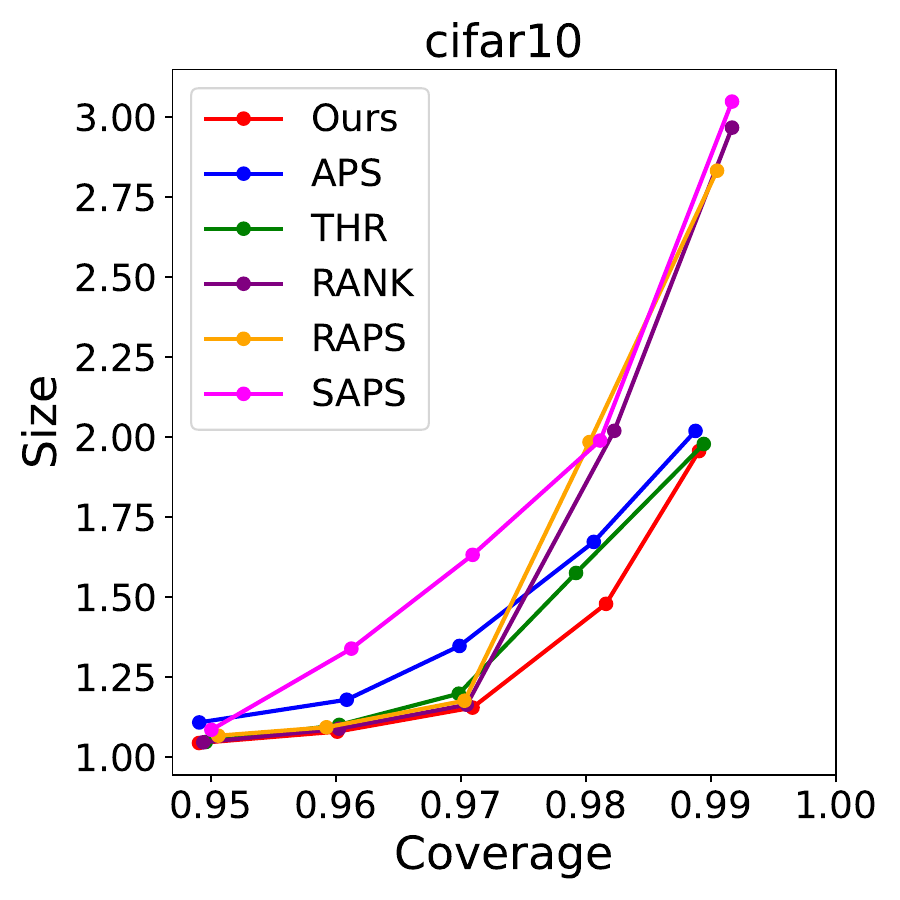}
    \includegraphics[width=0.45\linewidth]{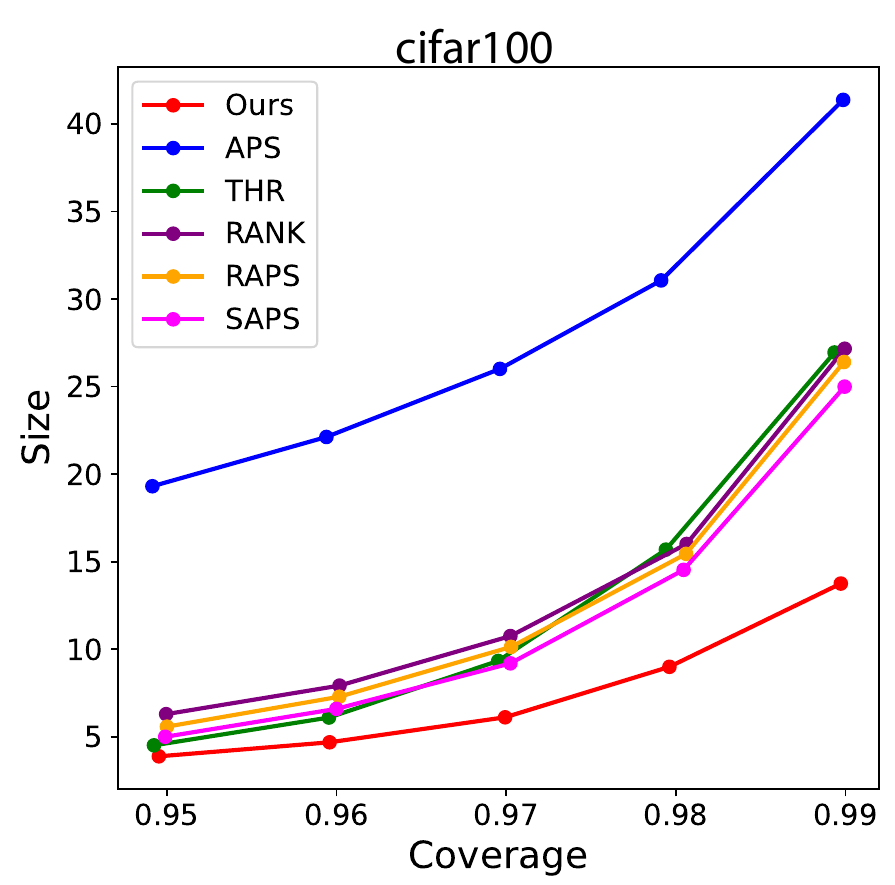}
    \caption{Comparison of size vs. coverage for various score functions and our proposed method across $\alpha$ values (0.01-0.05). Our weighted combination method (red) consistently outperforms the other baseline methods by achieving the desired coverage rate with smaller prediction set sizes.}
    \label{fig:size_vs_coverage}
\end{figure}

\begin{table}[t]
\centering
\resizebox{0.8\textwidth}{!}{
\begin{tabular}{lcccc}
\hline
& \multicolumn{2}{c}{$\alpha=0.01$} & \multicolumn{2}{c}{$\alpha=0.05$} \\
\cline{2-5}
Method & Coverage & Size & Coverage & Size \\
\hline
VFCP & 0.990 (0.003) & 13.782 (1.114) & 0.950 (0.005) & 3.890 (0.266) \\
EFCP & 0.989 (0.003) & 13.306 (0.464) & 0.949 (0.005) & 3.754 (0.096) \\
DLCP & 0.989 (0.003) & 13.298 (0.461) & 0.949 (0.005) & 3.752 (0.097) \\
DLCP+ & 0.989 (0.003) & 13.299 (0.459) & 0.949 (0.005) & 3.753 (0.097) \\
APS & 0.990 (0.003) & 40.217 (1.786) & 0.949 (0.006) & 19.545 (1.022) \\
THR & 0.989 (0.003) & 26.949 (2.198) & 0.949 (0.006) & 4.519 (0.381) \\
RANK & 0.990 (0.003) & 27.161 (2.421) & 0.950 (0.006) & 6.298 (0.395) \\
RAPS & 0.990 (0.003) & 26.403 (2.505) & 0.950 (0.005) & 5.581 (0.263) \\
SAPS & 0.990 (0.003) & 25.096 (2.298) & 0.950 (0.006) & 5.036 (0.302) \\
\hline
\end{tabular}}
\caption{Coverage and size of different methods for $\alpha=0.01$ and $\alpha=0.05$ on CIFAR-100 dataset. Results are shown as mean (standard deviation). The first four methods corresponds to the splitting methods in Section~\ref{sec:splitting}.} 
\label{tab:results_cifar100_combined}
\end{table}
\normalsize

The primary objective of our experiments was to evaluate the performance of the proposed weighted score function in comparison with three foundational base score functions: APS, THR, and RANK. Additionally, we compared our method against two competitive baseline score functions, RAPS \cite{angelopoulos2021uncertainty} and SAPS \cite{huang2024conformal}.

Figure \ref{fig:boxplot} compares coverage and prediction set sizes across methods at $\alpha=0.01$ on CIFAR-100, while Figure \ref{fig:size_vs_coverage} evaluates performance across $\alpha\in[0.01,0.05]$. Using VFCP splits for consistency, our method achieves guaranteed coverage with the smallest prediction sets overall. The advantages are particularly pronounced on CIFAR-10 across all $\alpha$ values and on CIFAR-100 for $\alpha\leq0.02$.

Notably, THR exhibits strong performance on less challenging tasks (CIFAR-10), where our weighting scheme naturally assigns it dominant weights. In these cases, score averaging provides limited improvement. However, for complex multi-class scenarios (CIFAR-100) or stringent significance levels ($\alpha<0.05$), our method demonstrates clear superiority by optimally combining score functions.

Furthermore, to explore the influence of different data splitting strategies (as discussed in Section \ref{subsec:weighting}), we conducted additional comparisons of our weighted score function using various split approaches. We also included the performance results of the five individual score functions when utilizing the VFCP split method. Table \ref{tab:results_cifar100_combined} provides a comprehensive summary of the coverage and size metrics at significance levels of $\alpha = 0.01$ and $\alpha = 0.05$. These results highlight the effectiveness of the different data splitting strategies. It is important to note that while APS aims to achieve conditional coverage, the other methods, including our approach, do not specifically target conditional coverage. Consequently, it is not surprising that the alternative methods exhibit better efficiency compared to APS.

\subsection{Model Weighting}
In addition to choosing the score functions in Section~\ref{sec:score:functions}, we further conducted an experiment to compare the performance of various models and their weighted combinations for prediction set construction on CIFAR-10 and CIFAR-100. 
For CIFAR-10, we used the models ResNet-56, ShuffleNetV2 (1.0x), and VGG16-BN. For CIFAR-100, we used the models VGG16-BN, RepVGG-A2, and MobileNetV2 (1.0x). The performance of the weighted combination method was compared against individual models.

Specifically, for each of the four score functions: THR, APS, RAPS, and SAPS, we generated prediction sets by combining the scores of different models using weights selected by our proposed method. 
The results are visualized in Figures~\ref{fig:cifar10} and~\ref{fig:cifar100}, which demonstrate the effectiveness of our weighted combination method in terms of weighing scores of different models. 

\begin{figure}
    \centering
    \includegraphics[width=0.45\columnwidth]{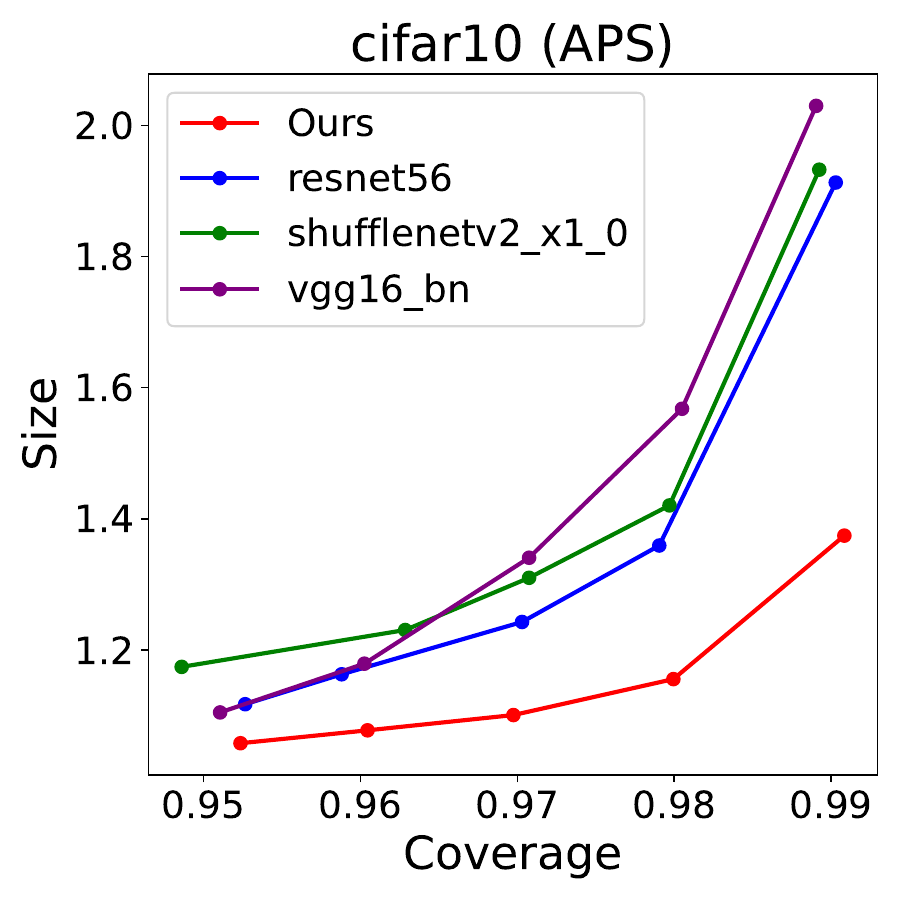}
    \includegraphics[width=0.45\columnwidth]{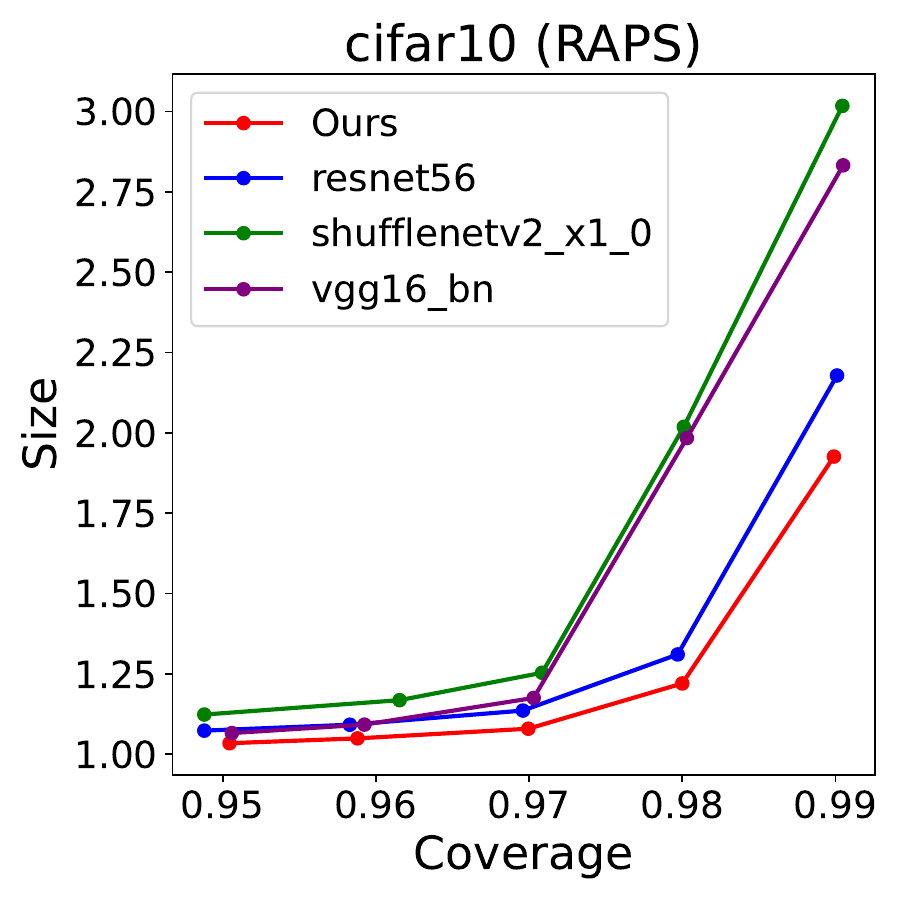}
    \includegraphics[width=0.45\columnwidth]{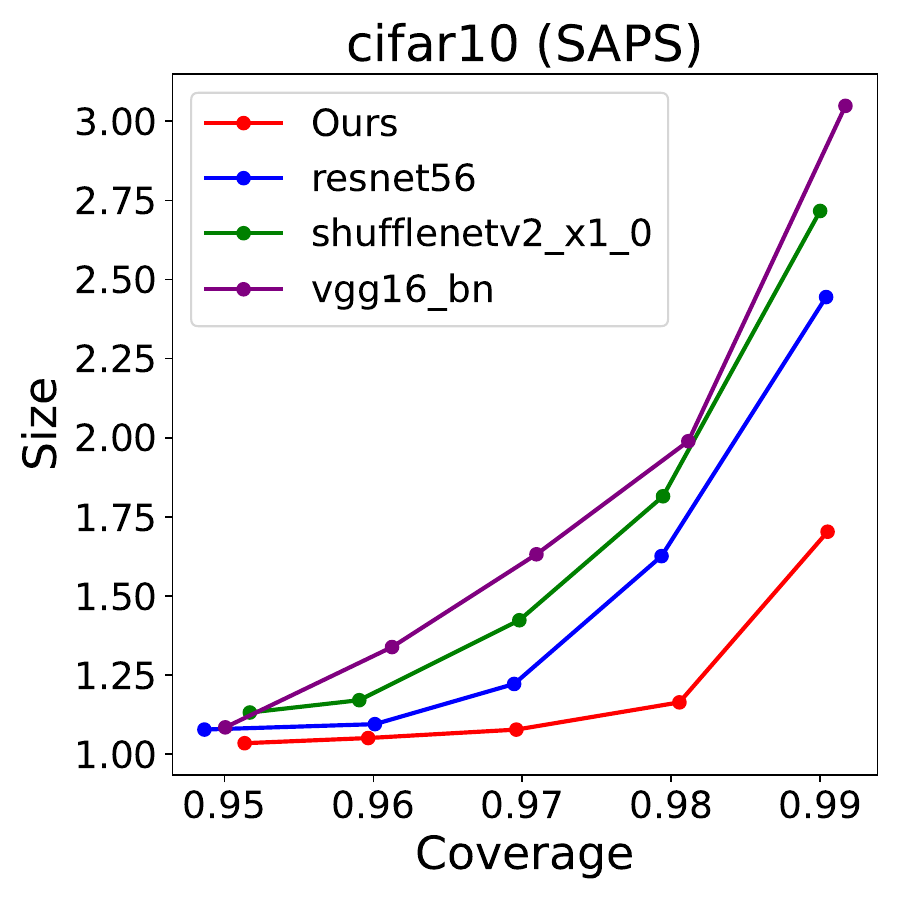}
    \includegraphics[width=0.45\columnwidth]{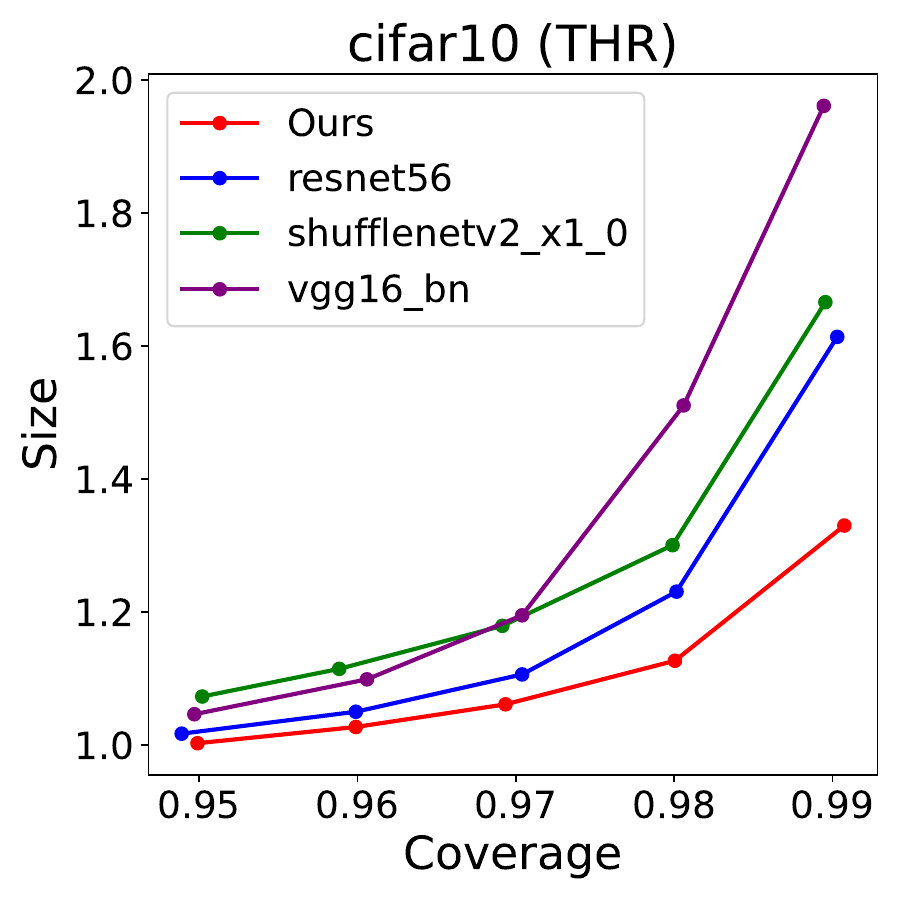}
    \caption{Across various score functions, our weighted combination of models outperformed any individual model and achieved optimal size on the CIFAR-10 dataset across $\alpha$ values (0.01–0.05).}
    \label{fig:cifar10}
\end{figure}

\begin{figure}
    \centering
    \includegraphics[width=0.45\columnwidth]{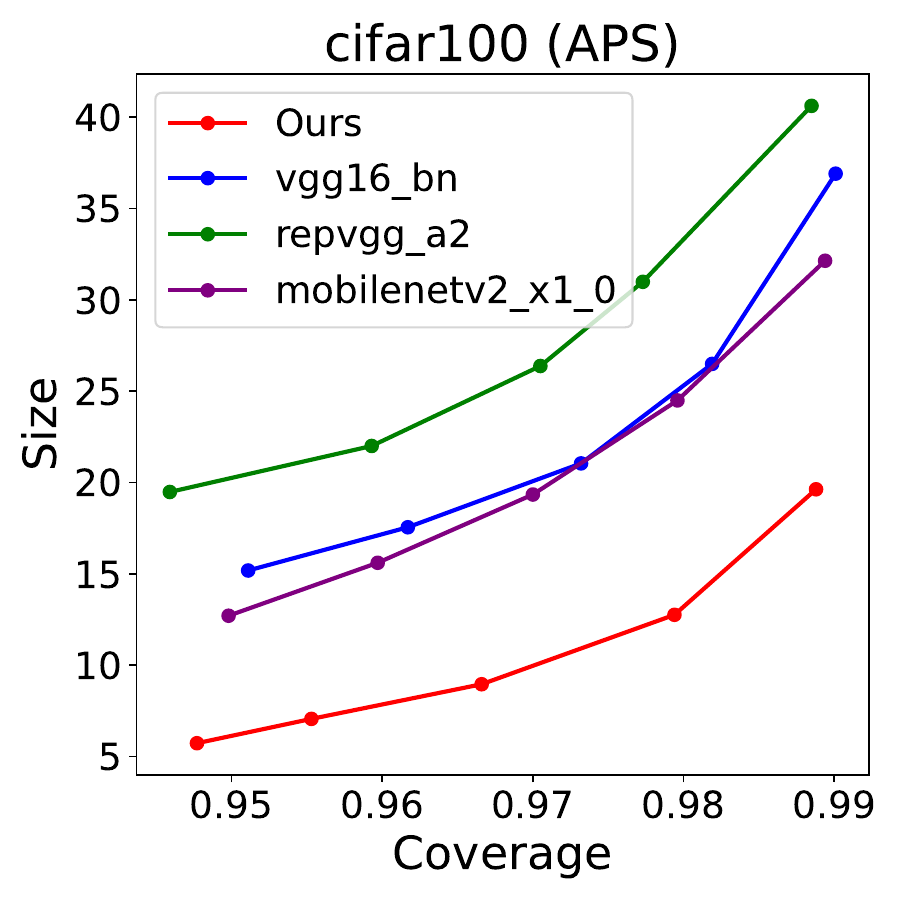}
    \includegraphics[width=0.45\columnwidth]{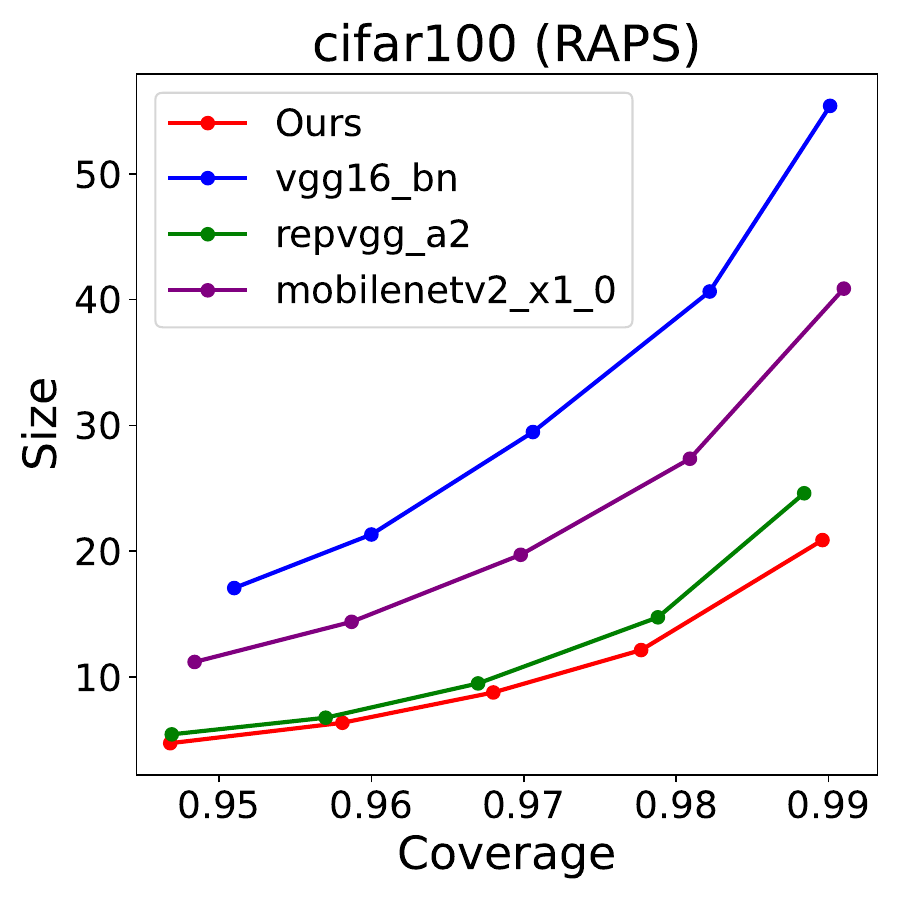}
    \includegraphics[width=0.45\columnwidth]{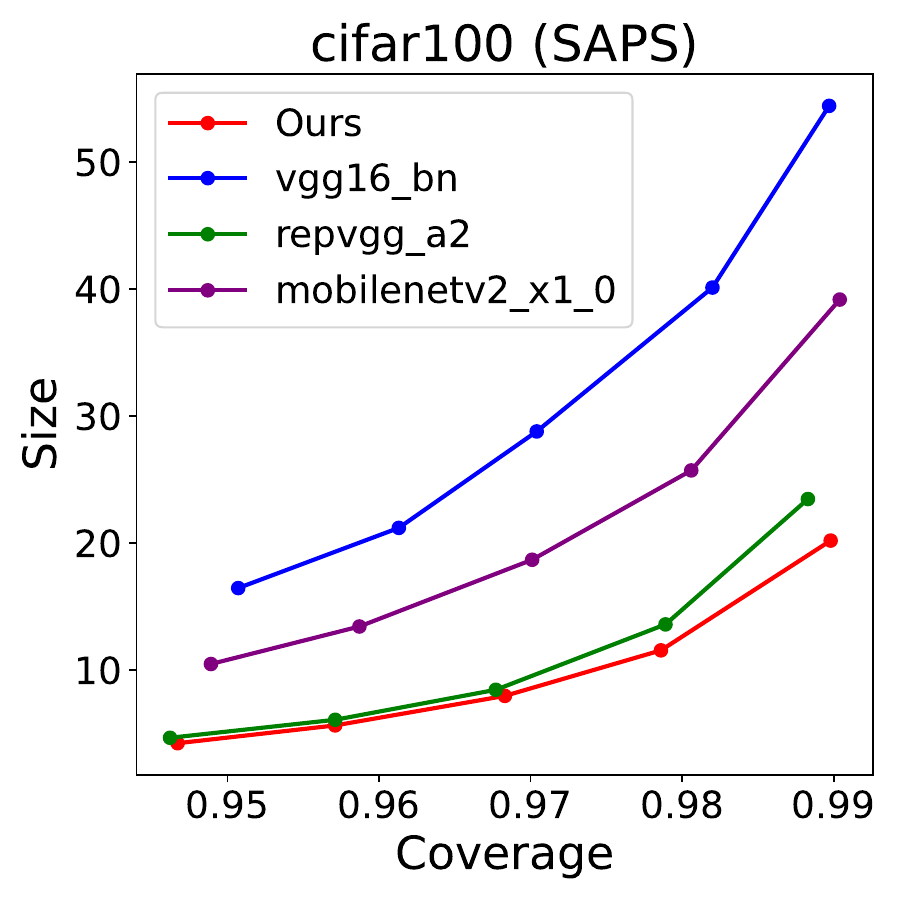}
    \includegraphics[width=0.45\columnwidth]{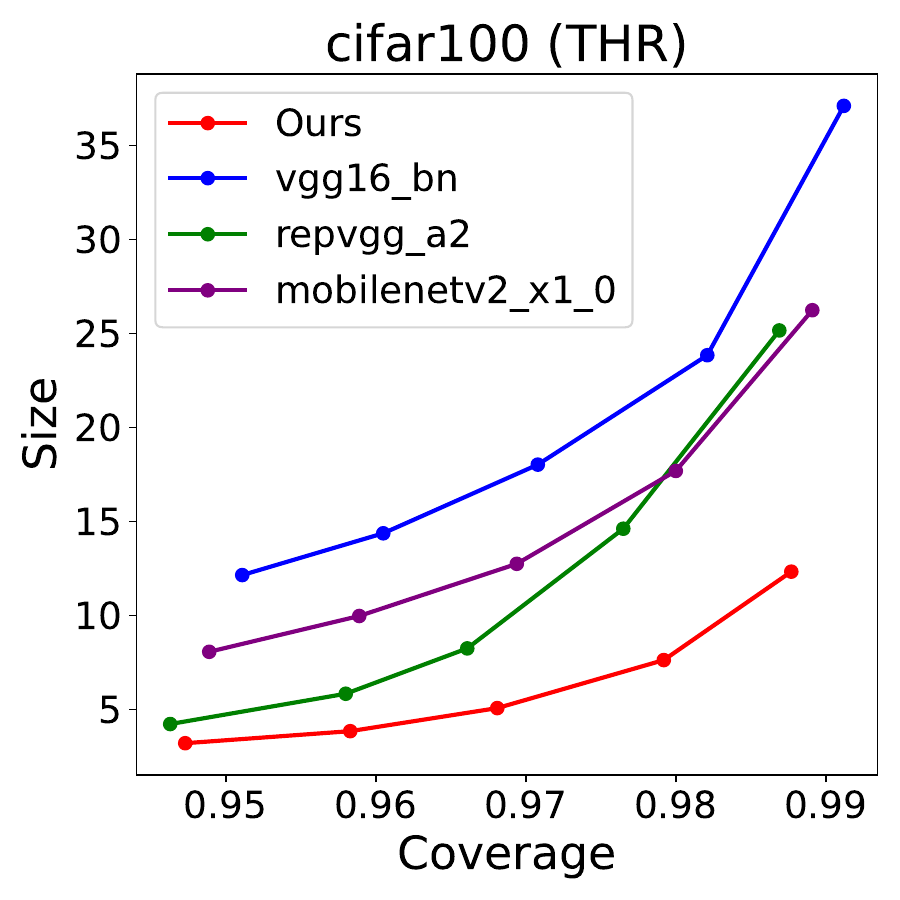}
    \caption{Across various score functions, our weighted combination of models outperformed any individual model and achieved optimal size on the CIFAR-100 dataset across $\alpha$ values (0.01–0.05).}
    \label{fig:cifar100}
\end{figure}

\section{Related Work}\label{sec:related}

Our work builds upon existing research in conformal prediction, particularly focusing on model averaging and calibration techniques. In the realm of model aggregation, Yang et al.~\cite{yang2024selection} introduced two selection algorithms designed to minimize the width of prediction intervals by aggregating and selecting from multiple regression estimators. 

The technique of VC dimension has been applied in \cite{yang2024selection} and~\cite{candes2023conformalized}. In Section D.2 of~\cite{yang2024selection}, the uniform probability bound is over a finite set. In~\cite{candes2023conformalized}, the authors considers maximization over real numbers. They have not consider the techiniques subgraph theory like our approach. The methods proposed in~\cite{carlsson2014aggregated, linusson2017calibration} can aggregate different prediction models to perform conformal prediction, but the methods do not focus on optimizing the efficiency. 

A concurrent preprint~\cite{liang2024conformal} examines parametrized score functions $s_\lambda(x,y)$ for $\lambda\in\Lambda$ in regression settings. While their general framework handles arbitrary parameter spaces, controlling the Rademacher complexity becomes challenging except in specialized cases. Our approach differs fundamentally by combining pre-defined score functions linearly. This structural constraint enables theoretical guarantees dependent only on the number of constituent scores rather than the complexity of function spaces.

\section{Conclusion and Discussion}\label{sec:conclusion}

Our weighted score aggregation method enables efficient and valid prediction set construction for multi-class classification through optimized combinations of score functions and strategic data splitting. Theoretically, we establish finite-sample coverage guarantees and oracle inequalities quantifying the efficiency gap between our method and optimal weights. Empirically, experiments demonstrate consistent maintenance of coverage requirements with minimal prediction set sizes compared to single-score baselines. This work bridges model averaging and conformal prediction, providing a flexible framework for uncertainty quantification that adapts to dataset characteristics through optimal score combinations.

These results suggest several directions for future work:

\begin{enumerate}
    \item \textbf{Regression Extension}: Algorithmic adaptation to regression requires new theoretical tools due to continuous output spaces, where finite-class union bounds become inapplicable. Potential approaches include metric entropy analysis or covering number techniques.
    
    \item \textbf{Optimization Enhancement}: Developing gradient-based alternatives to grid search, such as differentiable conformal objectives or online weight adaptation during model training, would improve scalability with many score functions.
\end{enumerate}

\appendix
\onecolumn

\section{Grid Search Implementation}\label{app:grid_search}
To solve~\eqref{eq:hat:w}, we discretize the probability simplex:
\[
\Delta^{d-1} = \left\{\bw \in \mathbb{R}_+^d : \sum_{j=1}^d w_j = 1\right\}
\]
using a grid resolution \(\varepsilon = 0.01\). Candidate weights are generated as:
\[
\mathcal{W} = \left\{\bw = (k_1\varepsilon, \ldots, k_d\varepsilon) \,\, \bigg| \,\, k_j \in \mathbb{N}, \,\, \sum_{j=1}^d k_j = \lfloor 1/\varepsilon \rfloor \right\},
\]
yielding \(\binom{\lfloor 1/\varepsilon \rfloor + d - 1}{d - 1}\) distinct weight vectors. For \(d=3\) scores, this produces 5,151 candidates, ensuring comprehensive coverage of the parameter space while remaining computationally tractable through parallel evaluation.

\section{Proof of Lemma~\ref{lem:vc:theory}}\label{sec:proof:lem}

\subsection{VC Dimension of the Subgraph Classes}

\begin{prop}\label{prop:vc:dim}
    Both of the following classes of functions
    \begin{align}\label{eq:func:class:labeled}
        \{(x,y)\mapsto \mathbf 1\{\langle \bw, s(x, y)\rangle\ge t\}: \bw\in \mathbb R^d, t\in\mathbb R\}
    \end{align}
    and 
    \begin{align}\label{eq:func:class:unlabeled}
        \{x\mapsto \mathbf 1\{\langle \bw, s(x, y)\rangle\ge t\}: \bw\in \mathbb R^d, t\in\mathbb R\},
    \end{align}
    where $y\in[K]$ is fixed, have VC-dimension at most $d+1$. 
\end{prop}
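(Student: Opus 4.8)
The plan is to recognize the defining threshold condition as an affine halfspace in the $d$-dimensional score space and then invoke the classical VC bound for halfspaces (equivalently, for finite-dimensional function spaces). First I would rewrite the condition: $\langle \bw, s(x,y)\rangle \ge t$ is equivalent to $\langle \theta, \phi(x,y)\rangle \ge 0$, where $\phi(x,y) := (s(x,y), 1) \in \mathbb{R}^{d+1}$ and $\theta := (\bw, -t) \in \mathbb{R}^{d+1}$. Thus every function in the class \eqref{eq:func:class:labeled} is the pullback, under the fixed map $\phi$, of a homogeneous halfspace indicator on $\mathbb{R}^{d+1}$. Equivalently, the functions $(x,y)\mapsto \langle \bw, s(x,y)\rangle - t$ all lie in the real vector space $\mathcal{G} = \mathrm{span}\{s_1, \dots, s_d, 1\}$, whose dimension is at most $d+1$.

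Next I would apply the standard fact that, for any vector space $\mathcal{G}$ of real-valued functions with $\dim \mathcal{G} = m$, the threshold class $\{z \mapsto \mathbf{1}\{g(z) \ge 0\} : g \in \mathcal{G}\}$ has VC-dimension at most $m$ (see, e.g., \cite{wainwright2019high}). With $m \le d+1$ this yields the claim for \eqref{eq:func:class:labeled} immediately. Since the statement for \eqref{eq:func:class:unlabeled} merely fixes $y$, the identical argument applies with the feature map $x \mapsto (s(x,y),1)$, the relevant function space being $\mathrm{span}\{s_1(\cdot,y),\dots,s_d(\cdot,y),1\}$, again of dimension at most $d+1$.

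If a self-contained argument is preferred, I would instead establish the upper bound directly by a Radon-type linear-dependence argument. Suppose some set of $d+2$ points $z_1,\dots,z_{d+2}$ were shattered, and set $v_i = (s(z_i),1) \in \mathbb{R}^{d+1}$. Since $d+2$ vectors in $\mathbb{R}^{d+1}$ are linearly dependent, there exist coefficients $c_i$, not all zero, with $\sum_i c_i v_i = 0$; reading off the last coordinate gives $\sum_i c_i = 0$, so both $P = \{i : c_i > 0\}$ and $N = \{i : c_i < 0\}$ are nonempty. By shattering, choose $(\bw,t)$ realizing the dichotomy that labels $P$ positive and $N$ negative. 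Then
\begin{align*}
0 = \sum_i c_i\big(\langle \bw, s(z_i)\rangle - t\big) = \sum_{i\in P} c_i\big(\langle \bw, s(z_i)\rangle - t\big) + \sum_{i\in N} c_i\big(\langle \bw, s(z_i)\rangle - t\big),
\end{align*}
where the first equality uses $\sum_i c_i s(z_i) = 0$ and $\sum_i c_i = 0$. Every summand is nonnegative, and those indexed by the nonempty set $N$ are strictly positive, forcing the right-hand side to be strictly positive --- a contradiction. Hence no set of $d+2$ points is shattered, and the VC-dimension is at most $d+1$.

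The argument is essentially routine once the reduction to halfspaces is made; the only points requiring care are the inequality bookkeeping in the Radon step (pairing the non-strict $\ge$ on $P$ with the strict $<$ on $N$ so that the contradiction survives), and the observation that if two distinct sample points collapse under $s$ to the same image they cannot be separated at all, which only makes shattering harder and hence does not threaten the bound.
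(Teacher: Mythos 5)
Your main argument is correct and is essentially the paper's proof: both identify $\langle \bw, s(\cdot,\cdot)\rangle - t$ as ranging over a vector space of functions of dimension at most $d+1$ (spanned by $s_1,\dots,s_d$ and the constant function) and then invoke the standard bound from \cite{wainwright2019high} (Proposition 4.20 / Example 4.21) that the threshold class of an $m$-dimensional function space has VC dimension at most $m$. The only difference is that you additionally supply a self-contained Radon-type linear-dependence argument in place of the citation; that argument is correct as written (the sign bookkeeping pairing the non-strict inequality on $P$ with the strict one on $N$ is handled properly, and indices with $c_i=0$ contribute nothing), and it is in effect a proof of the cited proposition specialized to this class, so it adds rigor but no new idea relative to the paper.
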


\begin{proof}
    Both of the classes are the subgraph classes of vector space of functions with dimension $d+1$. By Proposition 4.20 of~\cite{wainwright2019high}, these subgraph classes have dimension at most $d+1$. This is also a direct result of Example 4.21 in the book. 
\end{proof}

\subsection{Proof of the Lemma}
    This proof basically follows from Vapnik–Chervonenkis theory. We will use the theorems and lemmas in~\cite{wainwright2019high} as reference. The proof of both parts of the lemma are almost identical. We will focus on part (a). Let $\epsilon_i, i\in \mathcal I$ be i.i.d. symmetric random variables takes value $-1$ or $1$, i.e., $\mathbb P(\epsilon_i=-1) = \mathbb P(\epsilon_i=1)=0.5.$ Then we define the \emph{Rademacher complexity} for the function class in \eqref{eq:func:class:labeled}, 
    \begin{align*}
        \mathcal R(\mathcal I) := \mathbb E\left[\sup_{w,t}\left|\frac 1{|\mathcal I|} \sum_{i\in\mathcal I} \epsilon_i\mathbf 1\{\langle w, s(X_i, Y_i)\rangle\ge t\}\right|\right]. 
    \end{align*}
    By Theorem 4.10 in the book, 
    \begin{align}\label{eq:lemma:proof:one}
        \mathbb P\left(\Omega\left(\mathcal I, \mathcal R(\mathcal I)+\delta\right)\right)
    \ge 1-\exp\left(-\frac{|\mathcal I|\delta^2}{2}\right).
    \end{align}
    By Proposition~\ref{prop:vc:dim}, the VC dimension of the function class in \eqref{eq:func:class:labeled} is $d+1$. Then altogether with Lemma 4.14 and Proposition 4.18 in the book, 
    \begin{align}\label{eq:lemma:proof:two}
        \mathcal R(\mathcal I)\le 4\sqrt{\frac{(d+1)\log(|\mathcal I|+1)}{|\mathcal I|}}
    \end{align}
    We combine \eqref{eq:lemma:proof:one} and \eqref{eq:lemma:proof:two} to obtain the result of part (a). The proof of  part (b) of the lemma follows analogously, using the VC dimension result for the function class in \eqref{eq:func:class:unlabeled}. 
Let us define 
\begin{align*}
    \Gamma_y(\mathcal I,\eta) = \sup_{w\in \mathbb R^d, t\in \mathbb R} \left|\frac 1{|\mathcal I|} \sum_{i\in\mathcal I} \mathbf 1\{\langle w, s(X_i, y)\rangle\ge t\} - \mathbb E[\mathbf 1\{\langle w, s(X, y)\rangle\ge t\}] \right|\le \eta.
\end{align*}
For $\mathcal I\subseteq\Itrain\cup\Itest$,
\begin{align*}
    \mathbb P\left(\Gamma_y\left(\mathcal I, 8\sqrt{\frac{(d+1)\log(|\mathcal I|+1)}{|\mathcal I|}}+\delta\right)\right)
    \ge 1-\exp\left(-\frac{|\mathcal I|\delta^2}{2}\right).
\end{align*}
It is clear that
\begin{align*}
    \bigcup_{y\in[K]} \Gamma_y(\mathcal I, \eta)\subseteq \Gamma(\mathcal I, K\eta). 
\end{align*}
Taking the union bound on $y\in[K]$ implies
the result of part (b).

\section{Preliminary Lemmas for the Theorems}\label{sec:prelim:lemma}

\subsection{Bounds for Coverage Probability}

The following result requires the data for optimizing $\hat\bw$ are independent with the calibration set and the test set. The algorithm satisfying these conditions have the most reliable coverage probability. 
\begin{lemma}\label{lem:coverage:exchangeable}
    Let $\hat C_{1-\alpha}(x)$ be the output of Algorithm~\ref{alg:weight}. Suppose
    \begin{itemize}
        \item [(i)] $\{(X_i, Y_i)\}_{i\in\Ithree\cup\Itest}$ are exchangeable.
        \item[(ii)] $\{(X_i, Y_i)\}_{i\in\Ione\cup \Itwo}$ and $\{(X_i, Y_i)\}_{i\in\Ithree\cup\Itest}$ are independent.
    \end{itemize}
    Then for $(X,Y)$ in the test set, the coverage probability
    \begin{align*}
        \mathbb P(Y\in\hat C_{1-\alpha}(X))\ge 1-\alpha.
    \end{align*}
\end{lemma}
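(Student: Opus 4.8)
The plan is to reduce the claim to the classical split conformal coverage guarantee, with the two hypotheses doing exactly the work one expects. The single conceptual subtlety is that the weight $\hat\bw$ is \emph{chosen from the data}: it is the minimizer in~\eqref{eq:hat:w} of the empirical set size over $\Itwo$. However, $\hat\bw$ is a measurable function of $\{(X_i,Y_i)\}_{i\in\Ione\cup\Itwo}$ only, and by hypothesis (ii) this block is independent of the calibration-plus-test block $\{(X_i,Y_i)\}_{i\in\Ithree\cup\Itest}$. So relative to the randomness that actually governs calibration and testing, the selected weight may be treated as a frozen, deterministic vector.

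First I would condition on $\hat\bw$. Set $V_i := \langle \hat\bw, s(X_i,Y_i)\rangle$ for $i\in\Ithree\cup\Itest$. By independence (ii), the conditional law of $(V_i)_{i\in\Ithree\cup\Itest}$ given $\hat\bw$ equals the law obtained by feeding a \emph{fixed} weight $\bw$ into the block $\{(X_i,Y_i)\}_{i\in\Ithree\cup\Itest}$, which is exchangeable by hypothesis (i). Because applying the same coordinatewise map $(x,y)\mapsto\langle\bw,s(x,y)\rangle$ to an exchangeable family preserves exchangeability, the scores $(V_i)_{i\in\Ithree\cup\Itest}$ are, conditionally on $\hat\bw$, exchangeable.

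Next I would run the standard rank argument on this exchangeable block. Writing $n=|\Ithree|$ and $V_*=\langle\hat\bw,s(X,Y)\rangle$ for a test point, the construction~\eqref{eq:quantile:three}--\eqref{eq:prediction:test} makes $\{Y\in\hat C_{1-\alpha}(X)\}$ coincide with the event that $V_*$ lies on the correct side of the order statistic $Q^{(2)}_{1-\alpha}$ of $\{V_i:i\in\Ithree\}$. By exchangeability the rank of $V_*$ among the $n+1$ pooled scores is (sub)uniform on $\{1,\dots,n+1\}$, so the defining level $\lceil(1+n)(1-\alpha)\rceil$ of the order statistic bounds the miscoverage probability by $\alpha$; ties are absorbed by the ceiling in~\eqref{eq:quantile:three}. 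This yields $\mathbb P(Y\in\hat C_{1-\alpha}(X)\mid\hat\bw)\ge 1-\alpha$, and taking expectation over $\hat\bw$ removes the conditioning while preserving the bound.

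I expect the first reduction to be the only real obstacle to present cleanly --- not because it is deep, but because one must justify rigorously that conditioning on the \emph{optimizer} $\hat\bw$, rather than on an arbitrary fixed weight, leaves the exchangeability of the calibration-plus-test scores intact. This is exactly where assumption (ii) is indispensable: were the weight-selection data allowed to overlap or correlate with $\Ithree\cup\Itest$, the event defining $\hat\bw$ could bias the calibration ranks and the quantile bound would break. Everything after this reduction is the textbook split conformal computation.
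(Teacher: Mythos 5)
Your proposal is correct and follows essentially the same route as the paper's proof: use hypothesis (ii) to decouple $\hat\bw$ from the calibration-plus-test block, observe that applying the fixed map $(x,y)\mapsto\langle\bw,s(x,y)\rangle$ preserves the exchangeability guaranteed by (i), and then invoke the standard rank/order-statistic argument to bound miscoverage by $\alpha$. The paper states this in three terse sentences; your version merely makes the conditioning on the data-dependent optimizer $\hat\bw$ explicit, which is a welcome clarification rather than a different argument.
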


\begin{proof}
    $\hat\bw$ only depends on $\{(X_i, Y_i)\}_{i\in\Ione\cup \Itwo}$, so it is independent of $\{(X_i, Y_i)\}_{i\in\Ithree\cup\Itest}$. The weighted score function $\{\langle \hat\bw, s(X_i, Y_i)\rangle\}_{i\in \Ithree\cup\Itest}$ are also exchangeable. For any $(X,Y)$ in the test set, the rank of $s(X, Y)$ is smaller than $Q_{1-\alpha}(\hat\bw, \Ithree)$ with probability $\frac{\lceil (1+|\Ithree|)(1-\alpha)\rceil}{1+|\Ithree|}\ge 1-\alpha$.
\end{proof}

\begin{lemma}\label{lem:coverage:independent}
    Let $\hat C_{1-\alpha}(x)$ be the output of Algorithm~\ref{alg:weight}. Suppose
    \begin{itemize}
        \item [(i)] $\Omega(\Ithree,\eta_3)$  is satisfied. 
        \item[(ii)] $\{(X_i, Y_i)\}_{i\in\Ione\cup \Itwo}$ and $\{(X_i, Y_i)\}_{i\in\Itest}$ are independent. 
    \end{itemize}
    Then for $(X,Y)$ in the test set, the coverage probability
    \begin{align*}
        \mathbb P(Y\in\hat C_{1-\alpha}(X))\ge  \frac 1{|\Ithree|}\lceil(1+|\Ithree|)(1-\alpha)\rceil-\eta_3.
    \end{align*}
\end{lemma}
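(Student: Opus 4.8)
The plan is to mirror the argument behind Lemma~\ref{lem:coverage:exchangeable}, but to replace the exchangeability between $\Ithree$ and the test point—which is no longer available because $\hat\bw$ (and hence the threshold $Q^{(2)}_{1-\alpha}$) may depend on $\Ithree$—with the uniform concentration event $\Omega(\Ithree,\eta_3)$. First I would record the two structural facts that drive everything: the selected weight $\hat\bw$ is a function only of $\{(X_i,Y_i)\}_{i\in\Ione\cup\Itwo}$, and the threshold $Q := Q^{(2)}_{1-\alpha}$ is a function of $(\Ithree,\hat\bw)$. Since all of $\Ione,\Itwo,\Ithree$ are training-side indices, assumption (ii) gives that the pair $(\hat\bw, Q)$ is independent of the test sample $(X,Y)$. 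I would then condition on the training-side data (equivalently, on the realized values of $\hat\bw$ and $Q$), noting that $\Omega(\Ithree,\eta_3)$ is itself a training-side event, so this conditioning leaves the test point independent of $(\hat\bw,Q)$.

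Next I would establish the empirical guarantee on $\Ithree$. By the definition of $Q$ in~\eqref{eq:quantile:three} as the $\lceil(1+|\Ithree|)(1-\alpha)\rceil$-th order statistic of $\{\langle\hat\bw, s(X_i,Y_i)\rangle : i\in\Ithree\}$, at least $\lceil(1+|\Ithree|)(1-\alpha)\rceil$ of the calibration points satisfy $\langle\hat\bw, s(X_i,Y_i)\rangle \ge Q$, so that
\begin{align*}
\frac{1}{|\Ithree|}\sum_{i\in\Ithree}\mathbf 1\{\langle\hat\bw, s(X_i,Y_i)\rangle\ge Q\}
\ge \frac{1}{|\Ithree|}\lceil(1+|\Ithree|)(1-\alpha)\rceil.
\end{align*}

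The crux is to transfer this empirical statement to the population. Here I would invoke $\Omega(\Ithree,\eta_3)$, whose defining inequality~\eqref{eq:Omega:def} is a supremum over all $\bw\in\mathbb R^d$ and $q\in\mathbb R$; this uniformity is exactly what licenses substituting the data-dependent pair $(\hat\bw, Q)$ into it. Doing so yields, for the realized $(\hat\bw, Q)$,
\begin{align*}
\mathbb E\big[\mathbf 1\{\langle\hat\bw, s(X,Y)\rangle\ge Q\}\,\big|\,\hat\bw, Q\big]
\ge \frac{1}{|\Ithree|}\sum_{i\in\Ithree}\mathbf 1\{\langle\hat\bw, s(X_i,Y_i)\rangle\ge Q\}-\eta_3
\ge \frac{1}{|\Ithree|}\lceil(1+|\Ithree|)(1-\alpha)\rceil-\eta_3,
\end{align*}
where the conditional expectation uses the independence of the test point from $(\hat\bw, Q)$ to identify the population coverage at the fixed arguments $(\hat\bw, Q)$ with the conditional test coverage. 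Recognizing $\{\langle\hat\bw, s(X,Y)\rangle\ge Q\}=\{Y\in\hat C_{1-\alpha}(X)\}$ via~\eqref{eq:prediction:test} and taking expectation over $(\hat\bw, Q)$ gives the claim.

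The main obstacle is precisely the dependence of $(\hat\bw, Q)$ on $\Ithree$: one cannot simply condition on $\hat\bw$ and treat $\Ithree$ as a fresh calibration set, which is what would give exact $1-\alpha$ coverage in Lemma~\ref{lem:coverage:exchangeable}. The resolution is to pay the uniform-deviation price $\eta_3$ supplied by Lemma~\ref{lem:vc:theory}, and the only real care needed is bookkeeping: verifying that substituting random arguments into the uniform bound is valid (it is, since~\eqref{eq:Omega:def} is a supremum over all $\bw, q$) and that conditioning on the training-side event $\Omega(\Ithree,\eta_3)$ does not destroy the test/train independence used in the final step.
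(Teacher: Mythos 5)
Your proposal is correct and follows essentially the same route as the paper's proof: both arguments rest on the empirical coverage $\tfrac{1}{|\Ithree|}\lceil(1+|\Ithree|)(1-\alpha)\rceil$ attained at the threshold on $\Ithree$, the uniformity of $\Omega(\Ithree,\eta_3)$ over all $(\bw,q)$ to justify plugging in the data-dependent pair $(\hat\bw,Q)$, and the independence of the test point from the training-side data to convert the population quantity into a conditional coverage probability. The only difference is presentational: the paper detours through a comparison of the empirical quantile $Q_{1-\alpha}(\hat\bw,\Ithree)$ with a population quantile $Q_{1-\alpha'}(\hat\bw)$ and monotonicity of the survival function, whereas you apply the uniform deviation bound directly at $(\hat\bw,Q)$, which is a slightly cleaner bookkeeping of the same idea.
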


\begin{proof}
    Let us write the threshold in~\eqref{eq:quantile:three} $Q_{1-\alpha}^{(2)}:=Q_{1-\alpha}(\hat\bw,\Ithree)$ to emphasize that this quantile depends on $\hat\bw$ and the samples in $\Ithree$. By the procedure of the algorithm, we have
    \begin{align*}
        \mathbb P\left(Y\in\hat C_{1-\alpha}(X)\right)
        =\mathbb P\left(\langle \hat\bw, s(X,Y)\rangle\ge Q_{1-\alpha}(\hat\bw, \Ithree)\right).
    \end{align*}
    Assuming the event $\Omega(\Ithree, \eta_3)$, 
    \begin{align}\label{eq:quantile:order}
        \begin{split}
            Q_{1-\alpha}(\hat\bw,\Ithree) &= \sup\left\{t\in\mathbb R: \frac 1{|\Ithree|} \sum_{i\in \Ithree} \mathbf 1\{ \langle \hat\bw,s(x_i,y_i)\rangle\ge t\}\ge \frac 1{|\Ithree|} \lceil (1+|\Ithree|)(1-\alpha)\rceil \right\}\\
            &\ge\sup\left\{t\in\mathbb R: \mathbb P( \langle \hat\bw, s(X, Y)\rangle\ge t)\ge  \frac 1{|\Ithree|}\lceil(1+|\Ithree|)(1-\alpha)\rceil-\eta_3\right\}\\
            &=Q_{1-\alpha'}(\hat\bw),
        \end{split}
    \end{align}
    where $1-\alpha' = \frac 1{|\Ithree|}\lceil(1+|\Ithree|)(1-\alpha)\rceil-\eta_3.$
    By the definition of the quantile,
    \begin{align}\label{eq:alpha:pert}
        \mathbb P(\langle \hat\bw, s(X,Y)\rangle\ge Q_{1-\alpha}(\hat\bw, \Ithree))
        \ge \mathbb P(\langle \hat\bw, s(X,Y)\rangle\ge Q_{1-\alpha'}(\hat\bw))=1-\alpha'. 
    \end{align}
    The proof is complete. 
\end{proof}

\begin{lemma}\label{lem:coverage:dependent}
    Let $\hat C_{1-\alpha}(X;\hat\bw)$ be the output of Algorithm~\ref{alg:weight}. Suppose $\Omega(\Ithree, \eta_3)$ and $\Omega(\Itest, \eta_{\text{test}})$ hold, then the coverage proportion satisfies
    \begin{align*}
        \frac 1{|\Itest|} \sum_{i\in\Itest} \mathbf 1\{y_i\in \hat C(x_i)\} \ge  \frac 1{|\Ithree|}\lceil(1+|\Ithree|)(1-\alpha)\rceil-\eta_3-\eta_{\text{test}}.
    \end{align*}
\end{lemma}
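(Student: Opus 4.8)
The plan is to mirror the two-stage argument already used for Lemma~\ref{lem:coverage:independent}, but to replace the single fresh test point by the empirical average over $\Itest$ and to absorb the resulting fluctuation into the uniform bound $\Omega(\Itest,\eta_{\text{test}})$. Write $Q_{1-\alpha}^{(2)}=Q_{1-\alpha}(\hat\bw,\Ithree)$ for the threshold produced by~\eqref{eq:quantile:three}, and recall from the definition of the prediction set that $\mathbf 1\{y_i\in\hat C(x_i)\}=\mathbf 1\{\langle\hat\bw,s(x_i,y_i)\rangle\ge Q_{1-\alpha}^{(2)}\}$, so the coverage proportion is literally an empirical average of indicators of the form appearing in~\eqref{eq:Omega:def}.

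First I would pass from the empirical coverage proportion on $\Itest$ to the population coverage evaluated at the realized pair $(\hat\bw,Q_{1-\alpha}^{(2)})$. Since $\Omega(\Itest,\eta_{\text{test}})$ is a statement holding simultaneously for \emph{every} $\bw\in\mathbb R^d$ and $q\in\mathbb R$, once we condition on this event the inequality remains valid for any realization of the (random, possibly $\Itest$-dependent) pair $(\hat\bw,Q_{1-\alpha}^{(2)})$. Substituting $\bw=\hat\bw$ and $q=Q_{1-\alpha}^{(2)}$ into~\eqref{eq:Omega:def} yields
\[
\frac 1{|\Itest|}\sum_{i\in\Itest}\mathbf 1\{y_i\in\hat C(x_i)\}\ \ge\ \mathbb P\big(\langle\hat\bw,s(X,Y)\rangle\ge Q_{1-\alpha}^{(2)}\big)-\eta_{\text{test}}.
\]

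Next I would lower-bound the population coverage $\mathbb P(\langle\hat\bw,s(X,Y)\rangle\ge Q_{1-\alpha}^{(2)})$ by reusing the quantile comparison from Lemma~\ref{lem:coverage:independent}. On the event $\Omega(\Ithree,\eta_3)$, the same chain of inequalities as in~\eqref{eq:quantile:order}--\eqref{eq:alpha:pert} gives $\mathbb P(\langle\hat\bw,s(X,Y)\rangle\ge Q_{1-\alpha}^{(2)})\ge 1-\alpha'$ with $1-\alpha'=\frac 1{|\Ithree|}\lceil(1+|\Ithree|)(1-\alpha)\rceil-\eta_3$. This step relies only on $\Omega(\Ithree,\eta_3)$ together with the monotonicity of $t\mapsto\mathbb P(\langle\hat\bw,s(X,Y)\rangle\ge t)$, and is insensitive to any dependence between $\hat\bw$ and the data. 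Combining the two displays yields the claimed bound $\frac 1{|\Itest|}\sum_{i\in\Itest}\mathbf 1\{y_i\in\hat C(x_i)\}\ge \frac 1{|\Ithree|}\lceil(1+|\Ithree|)(1-\alpha)\rceil-\eta_3-\eta_{\text{test}}$.

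I expect the only real subtlety to lie in the first step: justifying that a data-dependent weight $\hat\bw$ — which under DLCP is in fact selected using $\Itest$ itself — may legitimately be inserted into the uniform bound for $\Itest$. This is precisely the role of the VC-based supremum in the definition of $\Omega$: the concentration holds over the entire class $\{(\bw,q)\}$ at once, so no independence between $\hat\bw$ and $\Itest$ is needed. This is exactly what distinguishes the present argument from Lemma~\ref{lem:coverage:independent}, where independence was invoked to handle a single fresh test point; here independence is traded for uniformity, and everything else reduces to the routine quantile estimate already established.
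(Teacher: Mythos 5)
Your proposal is correct and follows essentially the same route as the paper's own proof: use the uniformity of $\Omega(\Itest,\eta_{\text{test}})$ over all $(\bw,q)$ to plug in the data-dependent pair $(\hat\bw,Q_{1-\alpha}^{(2)})$, then invoke the quantile comparison from \eqref{eq:quantile:order}--\eqref{eq:alpha:pert} under $\Omega(\Ithree,\eta_3)$ to lower-bound the population coverage. Your write-up is in fact slightly cleaner than the paper's, which states an intermediate equality to $1-\alpha$ where the correct conclusion is the inequality you give.
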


\begin{proof}
    Under the event $\Omega(\Itest, \eta_{\text{test}})$, for all $w\in\mathcal W$, 
    \begin{align*}
        & \left|\frac 1{|\Itest|} \sum_{i\in\Itest} \mathbf 1\{\langle \hat\bw, s(X_i, Y_i)\rangle\ge Q_{1-\alpha}(\hat\bw, \Ithree)\} - \mathbb E_{(X,Y)}[\mathbf 1\{\langle \hat\bw, s(X, Y)\rangle\ge Q_{1-\alpha}(\hat\bw,\Ithree)\}] \right|\\
        &\le \sup_{\bw\in \mathbb R^d, t\in \mathbb R} \left|\frac 1{|\Itest|} \sum_{i\in\Itest} \mathbf 1\{\langle \bw, s(X_i, Y_i)\rangle\ge t\} - \mathbb E_{(X,Y)}[\mathbf 1\{\langle \bw, s(X, Y)\rangle\ge t\}] \right|\le \eta_{\text{test}},
    \end{align*}
    where $(X,Y)$ is an i.i.d. copy of $(X_i, Y_i)$ in the test set. In particular, we let $\bw=\hat\bw$ and $t = Q_{1-\alpha}(\hat\bw, \Ithree)$, we have
    \begin{align*}
        E[\mathbf 1\{\langle \hat\bw, s(X, Y)\rangle\ge Q_{1-\alpha'}(\hat\bw,\Ithree) \} \mid \hat\bw\ ] 
        =\mathbb P(\langle \hat\bw, s(X, Y)\rangle\ge Q_{1-\alpha}(\hat\bw,\Ithree) \mid\ \hat\bw) = 1-\alpha. 
    \end{align*}
    The remaining proof has similar argument as \eqref{eq:quantile:order} and \eqref{eq:alpha:pert}, and is omitted here. 
\end{proof}

\subsection{Bounds for Prediction Set Size}

\begin{lemma}\label{lem:prediction:set}
    Suppose the samples in $\Ione$ and $\Itwo$ satisfy $\Omega(\Ione, \eta_1)$ and $\Gamma(\Itwo, \xi_2)$ respectively, and suppose $Q_{1-\alpha'}(\hat \bw, \Ione)\le Q_{1-\alpha}(\hat \bw, \Ithree)$, then for $X$ in the test set, the expected prediction set size satisfies
    \begin{align*}
        \frac 1{|\Itwo|} \sum_{i\in\Itwo} \sum_{y\in[K]} \mathbf 1\{\langle \hat\bw, s(X_i,y)\rangle\ge Q_{1-\alpha}(\hat\bw,\Ithree)\}
        \le \mathbb E\left[ \sum_{y\in[K]} \mathbf 1\{\langle w^*, s(X,y)\rangle\ge Q_{1-\alpha_1}(w^*)\} \right]+\xi_2,
        \end{align*}
        where $1-\alpha_1 = \frac 1{|\Ione|}\lceil(1+|\Ione|)(1-\alpha')\rceil+\eta_1$.
    \end{lemma}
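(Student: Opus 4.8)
The plan is to chain four inequalities that progressively replace the data-dependent, empirical quantities on the left by the deterministic population quantities on the right, shedding the slack term $\xi_2$ and absorbing the concentration error into the level shift that produces $\alpha_1$. Throughout I write $N(\bw, q; x) = \sum_{y\in[K]} \mathbf 1\{\langle \bw, s(x,y)\rangle \ge q\} = |\hat C_{1-\alpha}(x,\bw,q)|$ for the prediction-set size, and use the one structural fact that matters: $N(\bw,q;x)$ is non-increasing in the threshold $q$.

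First I would invoke the hypothesis $Q_{1-\alpha'}(\hat\bw, \Ione) \le Q_{1-\alpha}(\hat\bw, \Ithree)$. Since raising the threshold only shrinks the set, replacing the calibration threshold $Q_{1-\alpha}(\hat\bw,\Ithree)$ by the smaller selection threshold $Q_{1-\alpha'}(\hat\bw,\Ione)$ can only increase each size, so the left-hand side is at most $\frac 1{|\Itwo|}\sum_{i\in\Itwo} N(\hat\bw, Q_{1-\alpha'}(\hat\bw,\Ione); X_i)$. But this is precisely the empirical objective that $\hat\bw$ minimizes over $\mathcal W$ (cf.~\eqref{eq:hat:w}), and since $w^* \in \mathcal W$, optimality of $\hat\bw$ bounds it by $\frac 1{|\Itwo|}\sum_{i\in\Itwo} N(w^*, Q_{1-\alpha'}(w^*,\Ione); X_i)$, an expression that no longer involves $\hat\bw$.

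Next I would pass to the population. Because $\Gamma(\Itwo, \xi_2)$ is a uniform deviation bound over all $(\bw,q)\in\mathbb R^d\times\mathbb R$, it applies to the deterministic weight $w^*$ paired with the (possibly $\Itwo$-dependent) threshold $q = Q_{1-\alpha'}(w^*,\Ione)$, giving
\begin{align*}
    \frac 1{|\Itwo|}\sum_{i\in\Itwo} N(w^*, Q_{1-\alpha'}(w^*,\Ione); X_i) \le \mathbb E\big[ N(w^*, Q_{1-\alpha'}(w^*,\Ione); X) \big] + \xi_2.
\end{align*}
Finally I would convert the empirical quantile $Q_{1-\alpha'}(w^*,\Ione)$ into the population quantile $Q_{1-\alpha_1}(w^*)$ via $\Omega(\Ione,\eta_1)$, mimicking~\eqref{eq:quantile:order}: any $q$ with $\mathbb P(\langle w^*, s(X,Y)\rangle\ge q)\ge \frac 1{|\Ione|}\lceil(1+|\Ione|)(1-\alpha')\rceil+\eta_1$ has empirical survival at least $\frac 1{|\Ione|}\lceil(1+|\Ione|)(1-\alpha')\rceil$ on the event $\Omega(\Ione,\eta_1)$, hence lies below $Q_{1-\alpha'}(w^*,\Ione)$; taking the supremum yields $Q_{1-\alpha'}(w^*,\Ione)\ge Q_{1-\alpha_1}(w^*)$ with $1-\alpha_1$ exactly as stated. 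Monotonicity of $N$ in $q$ then lowers the threshold in the expectation to $Q_{1-\alpha_1}(w^*)$, producing the claimed right-hand side.

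I expect the quantile-conversion step to be the main obstacle, because it is the only point where the precise bookkeeping among the selection level $1-\alpha'$, the ceiling $\lceil(1+|\Ione|)(1-\alpha')\rceil$ defining the order statistic, and the concentration slack $\eta_1$ must be tracked to land exactly on $\alpha_1$. The direction of every monotonicity step has to stay consistent (higher threshold $\Rightarrow$ smaller set, and inflating the target survival by $+\eta_1$ $\Rightarrow$ smaller population quantile), and one must explicitly check that the uniformity of $\Omega(\Ione,\eta_1)$ and $\Gamma(\Itwo,\xi_2)$ genuinely licenses substituting data-dependent thresholds; the steps using $\hat\bw$'s optimality and the hypothesis on the two thresholds are then comparatively routine.
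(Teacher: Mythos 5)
Your proposal is correct and follows essentially the same chain as the paper's proof: threshold monotonicity from the hypothesis $Q_{1-\alpha'}(\hat\bw,\Ione)\le Q_{1-\alpha}(\hat\bw,\Ithree)$, optimality of $\hat\bw$ against $w^*$, the quantile comparison $Q_{1-\alpha'}(w^*,\Ione)\ge Q_{1-\alpha_1}(w^*)$ via $\Omega(\Ione,\eta_1)$, and uniform concentration via $\Gamma(\Itwo,\xi_2)$. The only difference is that you apply $\Gamma(\Itwo,\xi_2)$ before the quantile conversion rather than after; since the $\Gamma$ bound is uniform over all thresholds and the set size is monotone in the threshold both empirically and in expectation, the two orderings are interchangeable.
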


\begin{proof}
    Under the assumption $Q_{1-\alpha'}(\hat\bw, \Ione)\le Q_{1-\alpha}(\hat\bw, \Ithree)$, for $w\in\mathcal W$, we have
    \begin{align*}
        \frac 1{|\Itwo|} \sum_{i\in\Itwo} \sum_{y\in[K]} \mathbf 1\{\langle w, s(X_i,y)\rangle\ge Q_{1-\alpha}(w,\Ithree)\}\le \frac 1{|\Itwo|} \sum_{i\in\Itwo} \sum_{y\in[K]} \mathbf 1\{\langle w, s(X,y)\rangle\ge Q_{1-\alpha'}(w,\Ione)\}.
    \end{align*}
    $\hat\bw$ is obtained from
    \begin{align*}
        \hat\bw \in \arg\min_{w\in\mathcal W} \frac 1{|\Itwo|} \sum_{i\in\Itwo} \sum_{y\in[K]} \mathbf 1\{\langle w, s(X,y)\rangle\ge Q_{1-\alpha'}(w,\Ione)\}
    \end{align*}
    Therefore, 
    \begin{align*}
        \frac 1{|\Itwo|} \sum_{i\in\Itwo} \sum_{y\in[K]} \mathbf 1\{\langle \hat\bw, s(X_i,y)\rangle\ge Q_{1-\alpha'}(\hat\bw,\Ione)\}
        \le \frac 1{|\Itwo|} \sum_{i\in\Itwo} \sum_{y\in[K]} \mathbf 1\{\langle w^*, s(X,y)\rangle\ge Q_{1-\alpha'}(w^*,\Ione)\}.
    \end{align*}
    Given the event $\Omega(\Ione, \eta_1)$, for $w\in\mathcal W$ and $t\in\mathbb R$,
    \begin{align*}
         \mathbb P( \langle w, s(X, Y)\rangle\ge t)=\mathbb E[\mathbf 1\{ \langle w, s(X, Y)\rangle\ge t\}]\le \frac 1{|\Ione|} \sum_{i\in\Ione} \mathbf 1\{ \langle w, s(X,Y)\ge t\}+\eta_1. 
    \end{align*}
    The lower bound of the LHS is also a lower bound of the RHS, so we have
    \begin{align}\label{eq:quantile:order:two}
        \begin{split}
            Q_{1-\alpha'}(w^*,\Ione) &= \sup\left\{t\in\mathbb R: \frac 1{|\Ione|} \sum_{i\in \Ione} \mathbf 1\{ \langle w,s(x_i,y_i)\rangle\ge t\}\ge \frac 1{|\Ione|} \lceil (1+|\Ione|)(1-\alpha')\rceil \right\}\\
            &\ge\sup\left\{t\in\mathbb R: \mathbb P( \langle w^*, s(X, Y)\rangle\ge t)\ge  \frac 1{|\Ione|}\lceil(1+|\Ione|)(1-\alpha')\rceil+\eta_1\right\}\\
            &=Q_{1-\alpha_1}(w^*), 
        \end{split}
    \end{align}
    where $1-\alpha_1 = \frac 1{|\Ione|}\lceil(1+|\Ione|)(1-\alpha')\rceil+\eta_1$ and $Q_{1-\alpha_1}(\bw^*)$ is the $(1-\alpha_1)$-quantile of the population distribution $\langle w, s(X,Y)\rangle$. Now we can conclude that
    \begin{align*}
        \frac 1{|\Itwo|} \sum_{i\in\Itwo} \sum_{y\in[K]} \mathbf 1\{\langle w^*, s(X,y)\rangle\ge Q_{1-\alpha'}(w^*,\Ione)\}
        \le \frac 1{|\Itwo|} \sum_{i\in\Itwo} \sum_{y\in[K]} \mathbf 1\{\langle w^*, s(X,y)\rangle\ge Q_{1-\alpha_1}(w^*)\}. 
    \end{align*}
    On the event $\Gamma(\Itwo, \xi_2)$,
    \begin{align*}
        \frac 1{|\Itwo|} \sum_{i\in\Itwo} \sum_{y\in[K]} \mathbf 1\{\langle w^*, s(X,y)\rangle\ge Q_{1-\alpha'}(w^*)\}
        \le \mathbb E\left[ \sum_{y\in[K]} \mathbf 1\{\langle w^*, s(X,y)\rangle\ge Q_{1-\alpha_1}(w^*)\} \right]+\xi_2.
    \end{align*}
    The proof is complete. 
\end{proof}

\section{Proof of the Theorems}\label{sec:proof:theorem}

\subsection{Proof of Theorem~\ref{thm:VFCP}}

Under the assumption of the theorem, the setting of VFCP satisfies the condition of Lemma~\ref{lem:coverage:exchangeable}. This proves the coverage probability in the theorem. For the expected prediction set size, under the event $\Gamma(\Itwo, \xi_2)$, by Lemma~\ref{lem:prediction:set}, for $X$ in the test set,
\begin{align*}
    \mathbb E\left[ \sum_{y\in[K]} \mathbf 1\{\langle \hat \bw, s(X_i,y)\rangle\ge Q_{1-\alpha}(\bw,\Ithree)\} \right]\le \frac 1{|\Itwo|} \sum_{i\in\Itwo} \sum_{y\in[K]} \mathbf 1\{\langle \hat \bw, s(X_i,y)\rangle\ge Q_{1-\alpha}(\bw,\Ithree)\} + \xi_2.
\end{align*}
This implies the coverage probability of the theorem. 

\subsection{Proof of Theorem~\ref{thm:EFCP}}

Under the assumption of the theorem, the setting of VFCP satisfies the condition of Lemma~\ref{lem:coverage:independent}. This proves the coverage probability in the theorem. For the expected prediction set size, since $\Ione=\Ithree$ and $\Gamma(\Itwo,\xi_2)$ is satisfied, by Lemma~\ref{lem:prediction:set}, for $X$ in the test set, 
\begin{align*}
    \mathbb E\left[ \sum_{y\in[K]} \mathbf 1\{\langle \hat \bw, s(X_i,y)\rangle\ge Q_{1-\alpha}(\bw,\Ithree)\} \right]\le \frac 1{|\Itwo|} \sum_{i\in\Itwo} \sum_{y\in[K]} \mathbf 1\{\langle \hat \bw, s(X_i,y)\rangle\ge Q_{1-\alpha}(\bw,\Ithree)\} + \xi_2.
\end{align*}
This implies the coverage probability of the theorem. 

\subsection{Proof of Theorem~\ref{thm:DLCP}}

Since $\Ione=\Ithree=\Itrain$ and $\Itwo=\Itest$, one can verify that the theorem is the direct result of Lemma~\ref{lem:coverage:dependent} and Lemma~\ref{lem:prediction:set}. 

\subsection{Proof of Theorem~\ref{thm:DLCPP}}

Similar as the setting of DLCP, the coverage probability is the direct result of Lemma~\ref{lem:coverage:dependent}. For the prediction set size, suppose $\Gamma(\Itwo,\xi_2)$ holds, then for $X$ in the test set, 
\begin{align*}
    \mathbb E[|\CDLP(X)|]
    &\le\frac 1{|\Itwo|} \sum_{i\in\Itwo} \sum_{y\in[K]} \mathbf 1\{\langle \hat\bw, s(X_i,y)\rangle\ge Q_{1-\alpha}(\hat\bw,\Ithree)\}+\xi_2\\
    &\le \mathbb E\left[ \sum_{y\in[K]} \mathbf 1\{\langle w^*, s(X,y)\rangle\ge Q_{1-\alpha'}(w^*)\} \right]+2\xi_2.
\end{align*}

\section{Additional Discussion on Score Functions}\label{sec:score}

The three basic score functions used in our experiment were:
\paragraph{Least Ambiguous Set Values Classifier (THR) \cite{sadinle2019least}.} The score function of THR is defined as:
\begin{align*}
    s_{\text{THR}}(x,y) =  \hat p_y(x).
\end{align*}
This score function is straightforward: it assigns a higher score to labels with a higher estimated probability. The prediction set includes labels with the highest estimated probabilities. This is the score function we have to include because if $\hat p$ is the true posterior probability, then the score function itself can achieve the smallest expected prediction set size.

\paragraph{Adaptive Prediction Set (APS) \cite{romano2020classification}.} The score function of APS is defined as:
\begin{align*}
    s_{\text{APS}}(x,y) = \sum_{y'\in[K]} \hat p_{y'}(x) \mathbf{1}\{\hat p_{y'}(x) \le \hat p_y(x)\},
\end{align*}
where $\mathbf{1}\{\cdot\}$ is the indicator function. This score function can be interpreted as the complement of the $p$-value for label $k$. It measures the sum of the estimated probabilities of all labels that have the same or a smaller estimated probability than label $k$.

\paragraph{Rank-based Score Function (RANK) \cite{luo2024trustworthy}.} The score function of RANK is defined as:
\begin{align*}
    s_{\text{RANK}}(x,y) = \frac{|\{k'\in[K]: \hat p_{k'}(x)<\hat p_k(x)\}|}{K-1},
\end{align*}
This score function assigns a score based on the rank of the estimated probability $\hat p_y(x)$ among all the estimated probabilities for input $x$. The rank is divided by $K-1$ so that the range of the score is from 0 to 1. The prediction set gives higher priority to labels with larger ranks.

The three score functions, THR, APS, and RANK, employ different strategies for assigning scores based on the estimated probabilities. THR directly utilizes the estimated probabilities as scores. APS, on the other hand, considers the cumulative probability of labels that have the same or lower estimated probabilities compared to the label of interest. RANK, in contrast, focuses on the relative ranking of the estimated probabilities among all possible labels. It is important to note that all these score functions preserve the order of the labels, which means that, for fixed $x$, the order of labels based on their estimated probabilities, $\hat p_y(x)$, remains the same when ranked according to their scores, $s(x,y)$.

\newpage

\section{Additional Experiments with Different Data Splitting Ratio}~\label{sec:add:exp}

\begin{figure*}[!htb]
    \centering
    \begin{subfigure}{\textwidth}
        \centering
        \includegraphics[width=0.9\textwidth]{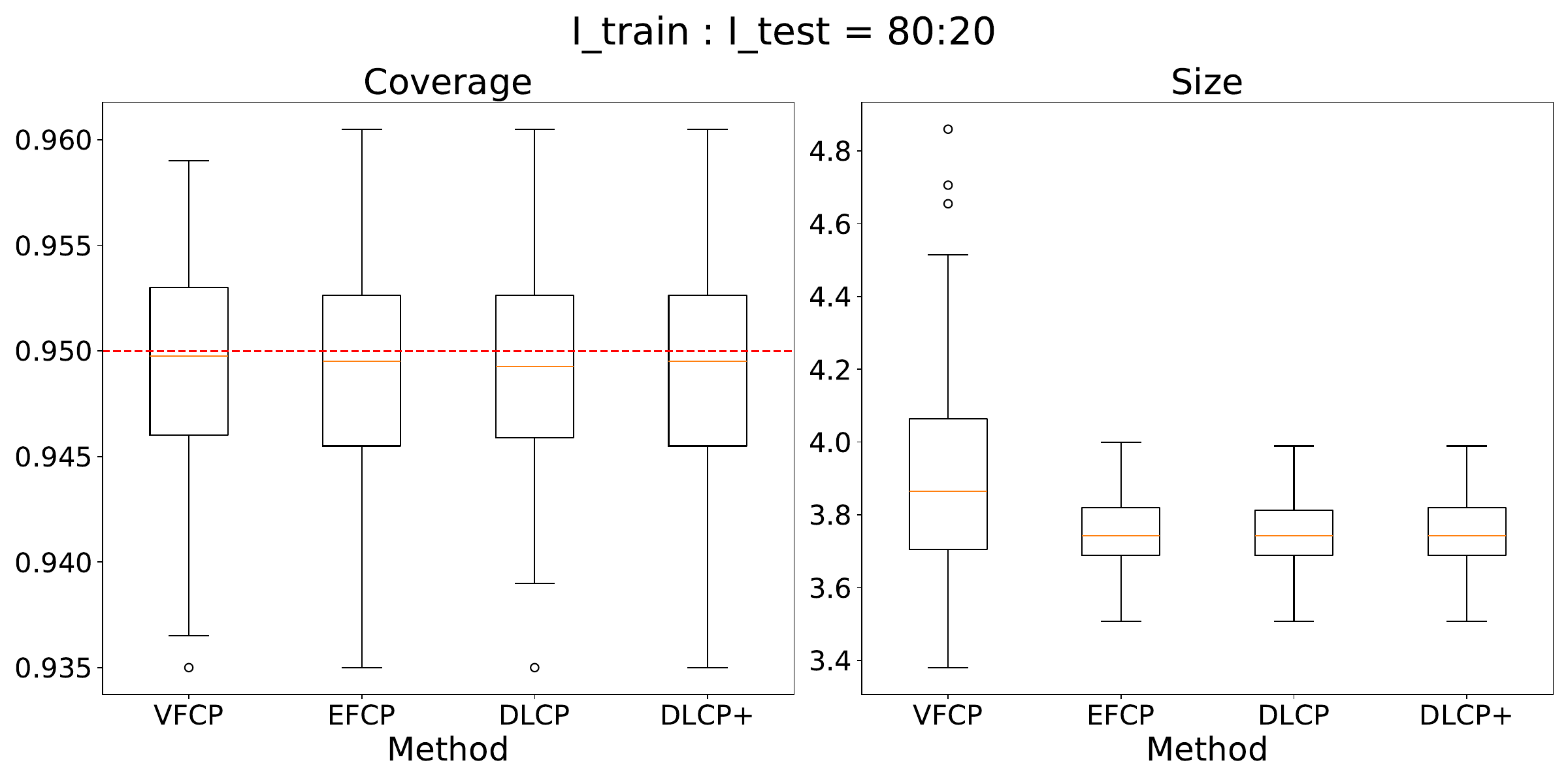}
        \caption{Comparison of coverage and size for different data split methods at a significance level of $\alpha=0.05$ when $\Itrain:\Itest$ = 80:20. EFCP, DLCP, and DLCP+ exhibit similar size results, but DLCP has the smallest coverage and the largest gap from the desired coverage level of $1-\alpha=0.95$. VFCP attains the desired coverage at the cost of having the largest prediction set size.}
        \label{fig:80-20}
    \end{subfigure}

    \begin{subfigure}{\textwidth}
        \centering
        \includegraphics[width=0.9\textwidth]{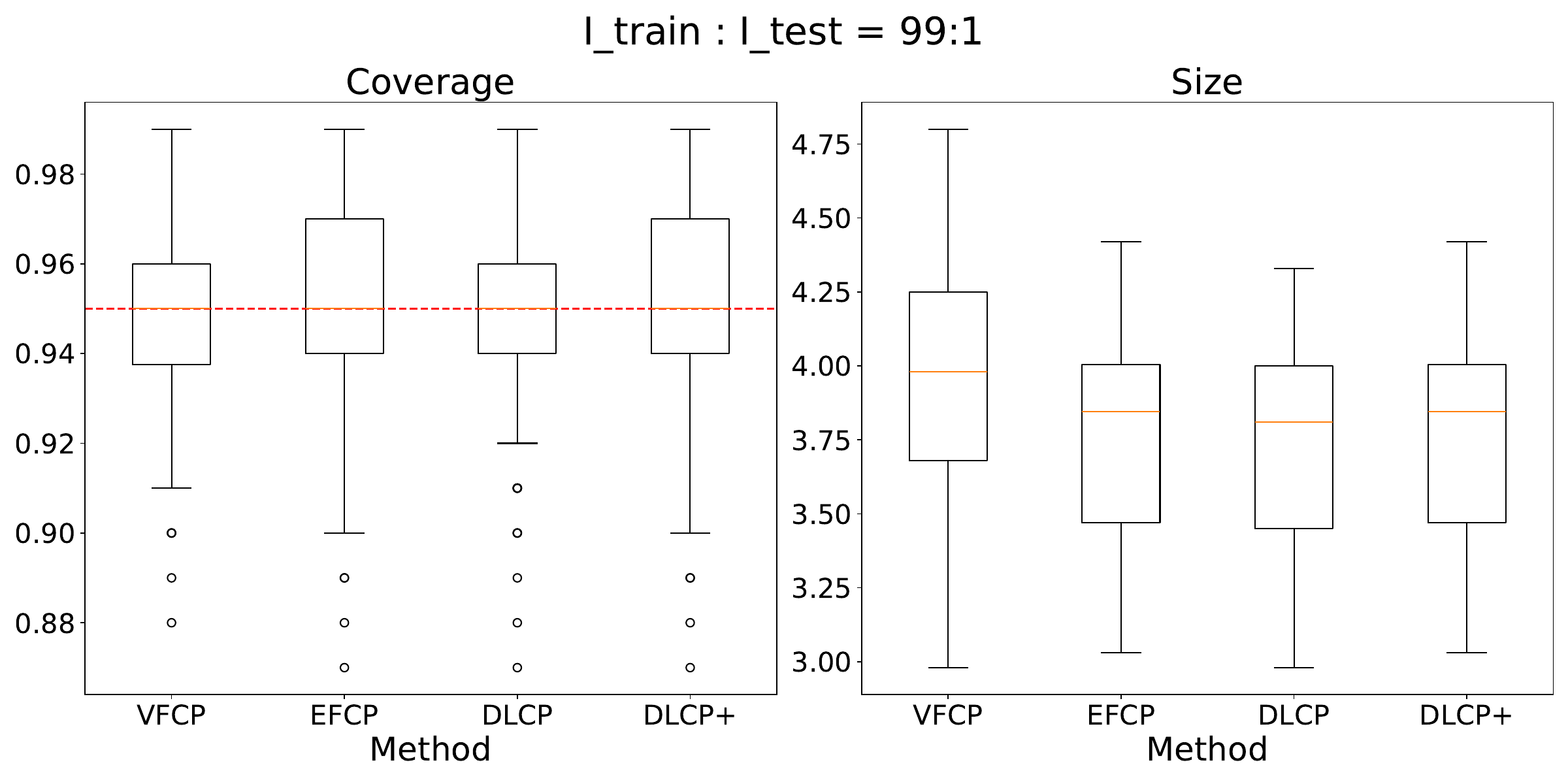}
        \caption{Comparison of coverage and size for different data split methods at a significance level of $\alpha=0.05$ when $\Itrain:\Itest$ = 99:1. DLCP achieves the smallest size among the compared methods. VFCP attains the desired coverage at the cost of having the largest prediction set size.}
        \label{fig:99-1}
    \end{subfigure}
    
    \caption{Comparison of coverage and size for different data split methods at $\alpha$=0.05.}
    \label{fig:data-split-comparison}
\end{figure*}
\end{document}